\theoremstyle{thmstyleone}%
\newtheorem{theorem}{Theorem}
\newtheorem{proposition}[theorem]{Proposition}%
\theoremstyle{thmstyletwo}%
\theoremstyle{thmstylethree}%
\newtheorem{definition}{Definition}%
\newcommand{\colorbibs}[2][blue]%
{%
\DeclareBibliographyCategory{ColoredBiblist#1}%
\addtocategory{ColoredBiblist#1}{#2}%
\AtEveryBibitem{\ifcategory{ColoredBiblist#1}{\color{#1}\bfseries}{}}
}
\begin{document}

\title[ ]{Learning using granularity statistical invariants for classification}

\author[1]{\fnm{Ting-Ting} \sur{Zhu}}\email{bxzhuting@163.com}

\author[1]{\fnm{Chun-Na} \sur{Li}}\email
{na1013na@163.com}

\author[2]{\fnm{Tian} \sur{Liu}}\email
{wyliutian@163.com}

\author*[1]{\fnm{Yuan-Hai } \sur{Shao}}\email{shaoyuanhai21@163.com}

\affil[1]{\orgdiv{School of Mathematics and Statistics}, \orgname{Hainan University},  \city{Haikou}, \postcode{570228}, \country{P.R.China}}

\affil[2]{\orgdiv{School of Information and Communication Engineering}, \orgname{Hainan University},  \city{Haikou}, \postcode{570228}, \country{P.R.China}}



\abstract{Learning using statistical invariants (LUSI) is a new learning paradigm, which adopts weak convergence mechanism, and can be applied to a wider range of classification problems. However, the omputation cost of invariant matrices in LUSI is high for large-scale datasets during training. To settle this issue, this paper introduces a granularity statistical invariant for LUSI, and develops a new learning paradigm called learning using granularity statistical invariants (LUGSI). LUGSI employs both strong and weak convergence mechanisms, taking a perspective of minimizing expected risk. As far as we know, it is the first time to construct granularity statistical invariants. Compared to LUSI, the introduction of this new statistical invariant brings two advantages. Firstly, it enhances the structural information of the data. Secondly, LUGSI transforms a large invariant matrix into a smaller one by maximizing the distance between classes, achieving feasibility for large-scale datasets classification problems and significantly enhancing the training speed of model operations. Experimental results indicate that LUGSI not only exhibits improved generalization capabilities but also demonstrates faster training speed, particularly for large-scale datasets.}


\keywords{Learning using statistical invariants, statistical invariants, reproducing kernel hilbert space, predicates, learning using granularity statistical invariants, granularity}



\maketitle

\section{Introduction}\label{sec1}

An invariant \cite{pfister2019invariant,J2020Parameter,chong2021learning,liu2022learning}  is an important concept in mathematics and physics and refers to a quantity or property that remains unchanged under a certain transformations or operations. Invariants have extensive applications in various fields of machine learning. For example, geometric have been applied invariants in computer vision \cite{mumuni2021cnn} and topological invariants have been used in neural networks \cite{zhang2018machine}. Recently, Vapnik and Izmailov \cite{vapnik2019rethinking} proposed a new invariant learning paradigm for solving classification problems named learning using statistical invariants (LUSI). LUSI incorporates the concept of weak convergence, allowing it to be combined with any method to create a new mechanism that implements both strong and weak convergence. For instance, the $V$-matrix support vector machine (VSVM) \cite{vapnik2015v} can be integrated with the LUSI paradigm to achieve a smoother fitting function. It is important to note that the $V$-matrix can effectively capture the intrinsic structure and distribution information of the data. This information is crucial for constructing statistical invariants to ensure stability under different transformations or rotations. By focusing on the relative spatial relationships between data points, the $V$-matrix represents a statistical invariant, ensuring a robust description of the geometric and distributional characteristics of the data. However, LUSI requires the computation and storage of invariant matrices that are related to the size of the dataset, making it impractical for handling large-scale classification problems. Additionally, existing LUSI learning models consider only the mutual positional relationships among sample points in the entire mixed distribution, neglecting the local positional information of samples, which may result in suboptimal classification outcomes for subsequent learning tasks.

For the first problem, there are two main research approaches for addressing large-scale datasets. The first approach involves using a subset of the training set to represent the entire training set. Representative methods in this category include chunking algorithms \cite{2020Function,viji2021comparative,ellappan2021dynamic} and decomposition algorithms \cite{mohammad2023enhanced,zhou2021comparison}, with the latter being improvements upon the chunking algorithms. The subclass
reduced set - support vector machine (SRS-SVM) \cite{dhamecha2019between} also employs a similar approach. This type of approach requires the selection of appropriate data blocks for training, which may lead to the random loss of some local information in the data. The second approach involves decomposing the large-scale matrix into smaller submatrices, aiming to solve the optimization problem efficiently. For example, Cho-Jui Hsieh et al. \cite{hsieh2014divide} proposed a novel divide-and-conquer solver for kernel SVMs and Hui Xue et al. \cite{xue2011structural} proposed a structural regularized support vector machine algorithm, the idea of which is to cluster the data and decompose the kernel support vector machine into smaller subproblems, enabling each subproblem to be solved independently and effectively. All of the above algorithms employ the idea of granularity \cite{xue2011structural}, addressing the computation and storage issues of large-scale kernel matrices in traditional support vector machines. Granularity construction enables the segmentation of extensive datasets into smaller segments, thereby alleviating the computational and storage burdens. This approach empowers the model to handle substantial data volumes more efficiently. 

For the second problem, incorporating the local positional information of the data is crucial for improving the performance of the classifier. To date, several algorithms that emphasize local information \cite{WEN2020549,WU2020106468} have been developed and widely applied. For example, support vector machine - $K$-nearest neighbor (SVM-KNN) \cite{wang2022particle,LI2020128,deshmukh2018study}, structured large margin machine (SLMM) \cite{gupta2021least,2020Large}, and Laplacian support vector machine (LapSVM) \cite{sun2022mlapsvm,qian2022identification,dong2021robust} are algorithms that emphasize local information. SVM-KNN first finds the $K$-nearest neighbor in the training dataset of each test sample using the KNN \cite{dong2021electrical,zhang2021chameleon,ZHANG2020234} algorithm. These nearest neighbors are subsequently used to construct a local subset of the training data. The SLMM utilizes clustering to explore the underlying distribution of the data, while the LapSVM treats each data point as a granularity. Local information can capture the local structure and similarity between samples, thereby improving the performance of the classifier. This approach provides a new perspective for designing classifiers because classifiers should be sensitive to the structure of the data distribution.


Building upon the preceding analysis, this paper explores a groundbreaking learning paradigm that leverages granularity, namely learning using granularity statistical invariants (LUGSI), to address the challenges encountered when using LUSI. LUGSI employs clustering techniques to granulate the entire dataset, ensuring that each data point contributes to the construction of invariants. The primary objective is to decompose the large invariant matrix into smaller grain-based invariant matrices. Additionally, grounded in the concept of granularity, LUGSI utilizes vectors containing structural information \cite{9565320,guo2016support} to construct corresponding statistical invariants. This approach enables a focused examination of the underlying structures within each granule, facilitating the extraction of more detailed local structural information. The experimental results substantiate the superior performance of the LUGSI algorithm compared to that of LUSI and other relevant methods. The noteworthy contributions of this work can be summarized as follows:

   \begin{itemize}
    \item[i)]
    We innovatively transforms a large invariant matrix into multiple smaller invariant matrices through the construction of granular statistical invariants. This transformation not only alleviates the computational burden but also demonstrates a novel approach to addressing classification challenges in large-scale datasets. Additionally, this approach contributes significantly to improving the training speed of the model, revealing valuable technical advancements.
   \end{itemize}
   \begin{itemize}
    \item[ii)]
    This innovative learning paradigm thoroughly investigates the structural information within a dataset by integrating vectors that encapsulate structural details as statistical invariants, leading to the enhanced classification performance of the model. Importantly, these vectors play a crucial role in estimating the local positional information of samples within the dataset, distinguishing this approach from the solely global positional information considered in LUSI.
   \end{itemize}

   \begin{itemize}
    \item[iii)]
    We explore the relationships between the VSVM, least squares support vector machine (LSSVM) \cite{kadkhodazadeh2021novel}, and LUGSI models. Additionally, equivalent conditions for the LUGSI and the other two models are discussed.
  \end{itemize}

  The remainder of this paper is organized as follows. In Section 2, we briefly introduce the direct estimation method of conditional probabilities used in the VSVM and the definition of a statistical invariant. Section 3 presents the proposed estimation model (LUGSI) derived in this paper. In Section 4, we present the experimental results obtained from several datasets. Finally, Section 5 provides concluding remarks based on our findings. Some common symbols used in this paper in Table \ref{tab11}.

\section{Direct methods of estimation of conditional probability function}\label{sec2}

\begin{table}[]
    \caption{Main notations and their definitions.}\label{tab11}%
    \begin{tabular}{@{}ll@{}}
    \toprule
    Notation& Definition \\
        \midrule
    $n$  & Dimension of a training samples point\\
    $\pmb x_i$ & The $i$-th training sample point of dimension $\left( n\times 1 \right) $\\
    $y_i$ & The label of the $i$-th training sample point\\
    $l$ & The number of training samples\\
    $l_k$ & The number of training samples in the $k$-th granule\\
    $\mathcal{X}$ & The input space\\
    $\left\{ f\left( \pmb x \right) \right\}$ & The admissible function set\\
    $R\left( \pmb x \right)$& The expected risk function\\
    $L\left( \pmb x  \right)$ & The loss function\\
    $P\left( \pmb x,y \right)$ & The joint probability distribution function\\
    $p\left( \pmb x\right)$& The probability density function\\
    $K$ & The kernel matrix of dimension $\left( l\times l \right) $\\
    $K_k$ & The kernel matrix of dimension $\left( l_k\times l \right)$ between the $k$-th  granule and the training set  \\
    $W\left( f \right)$ & The regularization term\\
    $V$ & The positive semidefinite matrix of dimension ($l\times l$) \\
    $\pmb v$ & The vector of dimension ($l\times 1$)\\
    $I$ & The identity matrix of dimension $\left( l\times l \right) $\\
    $m$ & The number of granules or the number of clusters in clustering\\
    $\pmb 1_l$ & The $\left( l\times 1 \right) $ dimensional vector consisting entirely of ones\\
    $ L_2$ & The special subspace in Hilbert space consisting of square-integrable functions that are measurable\\
    $ S_k$ & The set of sample points for the $k$-th cluster formed after the training samples clustering\\
    $ Y_k$ & The label vector corresponding to the $k$-th granule with dimension $\left( l_k\times1  \right) $\\
        \botrule
    \end{tabular}
\end{table}
Recently, Vapnik and Izmailov \cite{vapnik2015v} investigated the binary classification problem by estimating cumulative distribution functions and conditional probabilities. In this approach, different from traditional methods, they directly solved integral equations based on the definitions of conditional probability functions and density functions. 

We now introduce the method for estimating the conditional probability $P(y = 1|\pmb x)$ that is proposed in \cite{vapnik2015v}. Assuming any input vector $\pmb x\in \mathcal{X},$ it is generated from some unknown distribution $P(\pmb x)$. Suppose that some object $\mathcal{O}$ transforms any input vector $\pmb x$ into $y\in \left\{ 0,1 \right\}$. Assuming the transformation between the vector $\pmb x$ and label $y$ is governed by an unknown conditional probability function $P\left( y|\pmb x \right)$. 
Then for $f\left( \pmb x \right) =P\left( y=1|\pmb x \right)$, the equality 
\begin{equation}\label{1}
    P\left( y=1|\pmb x \right) p\left( \pmb x \right) =f\left( \pmb x \right) p\left( \pmb x \right) =p\left(  \pmb x ,y=1\right)  
\end{equation}  holds true,
where $
p\left(\pmb x, y=1 \right) 
$ and $
p\left( \pmb x \right) 
$ are  probability density functions. From (\ref{1}), the following equality holds for any function $
G\left( \pmb x-\pmb x' \right) \in L_2
$:
\begin{equation}\label{2}
    \int{G\left( \pmb x-\pmb x' \right)}f\left( \pmb x \right) dP\left( \pmb x \right) =\int{G\left( \pmb x-\pmb x' \right)}dP\left( \pmb x ,y=1\right).   
\end{equation}The conditional probability funtion $
P\left( y=1|\pmb x \right) 
$ is defined by the solution of Fredholm equation (\ref{2}) .


When the  joint cumulative distribution function $P \left(\pmb x, y = 1\right)$ and cumulative distribution function $P \left(\pmb x \right) $ are unknown, one needs to solve the equation using the available data $\left\{\left( \pmb x_1,y_1 \right) ,...,\left( \pmb x_l,y_l \right)\right\}$. In specific, the unknown cumulative distribution functions are substituted with their empirical estimates:
    $P_l\left( \pmb x \right) =\frac{1}{l}\sum\limits_{i=1}^l{\theta \left( \pmb x-\pmb x_i \right)}\label{eqsystem1}$, $P_l\left( \pmb x,y=1 \right) =\frac{1}{l}\sum\limits_{i=1}^l{y_i\theta \left( \pmb x-\pmb x_i \right)}\label{eqsystem2},$
where the one-dimensional step function is defined as 
$\theta \left( z \right) =\left\{ \begin{array}{l}
	1,\ if\ z\ge 0,\\
	0,\ if\,\,z<0,\\
\end{array} \right.$ and the muti-dimensional step function ($\boldsymbol{\pmb z}=\left( z^1,...,z^n \right)$) is defined as $
\theta \left( \boldsymbol{\pmb z}\right) =\prod\limits_{k=1}^n{\theta \left( z^k \right)}.$

Then from (\ref{2}), we obtain
\begin{equation}\label{5555}
    \frac{1}{l}\sum_{i=1}^l{G\left( \pmb x-\pmb x_i \right)}f\left( \pmb x_i \right) =\frac{1}{l}\sum_{i=1}^l{G\left( \pmb x-\pmb x_i \right)y_i}.     
\end{equation}
In order to estimate the condition probality function, one has  to solve equation (\ref{5555}) in the set of functions  $
\left\{ f\left( \pmb x \right) \right\} 
$. Therefore, it minimizes the distance 
\begin{align}\label{999}
    \rho ^2 &=\int{\left( \sum_{i=1}^l{G\left( \pmb x-\pmb x_i \right) f\left( \pmb x_i \right) -\sum_{j=1}^l{G\left( \pmb x-\pmb x_j \right)y_j}} \right)}^2d\mu \left( \pmb x \right) \nonumber \\
    &=\sum_{i,j=1}^l{\left( f\left( \pmb x_i \right) -y_i \right) \left( f\left( \pmb x_j \right) -y_j \right)}v\left( \pmb x_i,\pmb x_j \right),     
\end{align}
where $\mu \left( \pmb x \right)$ is a probability measure defined on the domain consisting of $\pmb x \in R^n$, and $v\left( \pmb x_i,\pmb x_j \right) =\int{G\left( \pmb x-\pmb x_i \right) G\left( \pmb x-\pmb x_j \right)}d\mu \left( \pmb x \right)$    
is the $\left(i,j\right)$-th element of an ($l\times l$)-dimension positive semidefinite matrix $V$, which is referred to as $V$-matrix \cite{vapnik2019rethinking}.


Now, to seek solutions that minimize equation (\ref{999}) within the set of functions $\left\{f\left( \pmb x,\pmb \alpha \right), \pmb \alpha \in \varLambda\right\}$ belonging to the Reproducing Kernel Hilbert Space (RKHS) \cite{bertsimas2022data}, associated with the continuous positive semi-definite kernel function $K (\pmb{x},\pmb{x'})$ defined for $\pmb x,\ \pmb x'\in R^n$ the function to be estimated has the following the representation \cite{vapnik2019rethinking}
\begin{equation}\label{9}
    f\left( \pmb x \right) =\sum_{i=1}^l{\alpha _iK\left( \pmb x_i,\pmb x \right)}+c=\pmb A^T \mathcal{K}\left( \pmb x \right) +c,
\end{equation}
where $\mathcal{K}(\pmb x)=(K(\pmb x_1,\pmb x),...,K(\pmb x_l,\pmb x))^T$, $\pmb A=(\alpha _1,...,\alpha _l)^T$, $\pmb x_i$ are vectors from the training set and $c\in R$ is the bias. The estimation problem of solving Fredholm equation (\ref{2}) is ill-posed, and it is commonly addressed by employing the Tikhonov's regularization \cite{vapnik2020complete} $W\left( f \right) =\lVert f\left( \pmb x \right) \rVert ^2=\sum\limits_{i,j=1}^l{\alpha _i\alpha _jK\left( \pmb x_i,\pmb x_j \right)}.$ Therefore, the solution to find (\ref{9}) needs to minimize

\begin{equation}\label{14}
    R\left( f \right) =\left( F\left( f \right) -Y \right)^T V\left( F\left( f \right) -Y \right) +\gamma W\left( f \right) ,
\end{equation}
where $Y=\left( y_1,y_2,...,y_l \right)^T,$ $F\left( f \right) =\left( f\left( \pmb x_1 \right) ,...,f\left( \pmb x_l \right) \right)^T,$ $V$ is defined as above and $\gamma >0$ is a regularization parameter. Model (\ref{14}) is an unconstraint least squares optimization problem, and its solving is quite routine, and its classification rule is defined as $ r(\pmb x)=\theta(f(\pmb x) - 0.5)$.

Constructing statistical invariants enables the extraction of specific intelligent information from data by preserving statistical properties. The learning using statistical invariants is rooted in the concept of weak convergence in the Hilbert space \cite{vapnik1999nature}. Assuming that we have a set of $m$ functions $\psi_s \left( \pmb x \right)\in L_2 $ that satisfies the following equalities
\begin{equation}\label{21}
    \int{\psi _s\left( \pmb x \right) P\left( y=1|\pmb x \right) dP\left( \pmb x \right) =\int{\psi _s\left( \pmb x \right) dP\left( \pmb x,y=1 \right) =a_s,\ \ s=1,..,m.}}
\end{equation}
Equation (\ref{21}) is referred to as the defining expression for the predicates. It is worth noting that the $a_s$ is the expected value of $\psi _s\left( \pmb x \right) $ with respect to measure $P\left( \pmb x,y=1 \right) $, which is supposed to be known.

By considering a pair $\left( \psi _s\left( \pmb x \right) ,a_s \right) $, one can identify the set $\left\{ f\left( \pmb x \right) \right\} =\left\{ P\left( y=1|\pmb x \right) \right\} $ of conditional probability functions that satisfies the equation (\ref{21}). In real applications, the empirical cumulative distribution functions $P_l\left( \pmb x,y=1 \right)$ and $P_l\left( \pmb x \right) $ are used to approximate $P\left( \pmb x,y=1 \right)$ and $P\left( \pmb x \right)$. Then
\begin{equation}\label{22}
    \frac{1}{l}\sum_{i=1}^l{\psi _s\left( \pmb x_i \right) f(\pmb x_i)}\approx a_s\approx \frac{1}{l}\sum_{i=1}^l{\psi _s\left( \pmb x_i \right)y_i},\ s=1,...,m.
\end{equation}
Equation (\ref{22}) is referred to as the statistical invariant equation. The subset of functions ${f(\pmb x)}$ that satisfies this equation is known as the set of admissible functions, and the expected conditional probability function $f(\pmb x)=P(y=1|\pmb x)$ belongs to this set. To simplify the notations, we define an $l$-dimensional vector: vector $
\varPhi _s=\left( \psi _s\left( \pmb x_1 \right) ,...,\psi _s\left( \pmb x_l \right) \right)^{T} $ of predicate $ \psi _s(\pmb x)$. Then equation (\ref{22}) can be rewritten as 
\begin{equation}\label{23}
    \varPhi _{s}^{T}F\left( f \right) =\varPhi _{s}^{T}Y,\ s=1,...,m.
\end{equation}
$V$-matrices are designed with the goal of maintaining stability across various transformations or rotations, ensuring insensitivity to alterations in the data distribution. This oversight is rectified as the $V$-matrix becomes a key element in the construction of statistical invariants. By specifically considering the relative spatial relationships among data points, the $V$-matrix offers a robust depiction of both the geometric features and distribution characteristics inherent in the data. The subsequent section will delve into a detailed explanation of how the $V$-matrix is employed in the construction of statistical invariants, providing a coherent and logical progression of ideas.

\section{Learning using granularity statistical invariants (LUGSI)
}\label{sec4}

From the VSVM model, it is evident that the scales of $V$-matrix involved is dependent on the number of samples. However, due to the limitations of software memory, they present certain constraints when dealing with large-scale classification problems. In this section, an innovative model is introduced to address this challenge by strategically partitioning the dataset. This partitioning enables the traditional large matrix to be divided into smaller matrices, thus rendering solvability to the large-scale classification problems that were previously unsolvable by the traditional model. However, in cases where certain datasets lack clear intrinsic structure or correlation, partitioning the dataset might not provide additional information to the model, thus failing to yield significant performance improvements.


Building upon the research framework centered around granularity \cite{liu2020three,xia2019granular,jiang2019accelerator}, the proposed learning using granularity statistical invariants (LUGSI) is structured into two distinct steps: clustering and learning. LUGSI leverages the $K$-means \cite{ikotun2023k,borlea2021unified} clustering technique to effectively capture the underlying distribution of data. It then incorporates the resulting data distribution information into the learning model. The subsequent sections delve into an exhaustive exploration of each of these steps, providing a comprehensive understanding of their methodologies and implications.

\subsection{Granularization}\label{subsec2}

\begin{definition}
    Given a dataset $T=\left\{ (\pmb x_i,y_i) \right\} _{i=1}^{l}$. Let $S_1,S_2,...,S_m$ be a partition of $T$ according to some relation measure, where the partition characterizes the whole data in the form some sructures such as cluster, and $S_1\cup S_2\cup \cdot \cdot \cdot \cup S_m=T$. Then $S_i$ is called a structural granularity, $i=1,2,...,m$.
\end{definition}

As the structural granularity depends on different assumptions about the actual data structure in real-world problems, the process of granulating the data according to the Definition 1 \cite{xue2011structural} serves not only to facilitate the analysis of data structures but also to convert a substantial volume of data into more compact and manageable units. This transformation contributes to the simplification of the complexity and computational costs associated with data processing. In this paper, we partition the dataset into small data subsets using the $K$-means clustering technique, and we refer to these subsets as \textquotedblleft granules\textquotedblright.

Given a dataset 
$T=\left\{ (\pmb x_i,y_i) \right\} _{i=1}^{l}$, where $\pmb x_i\in R^n
$ and $y_i\in \left\{ 0,1 \right\} $, the goal of the $m$-clustering problem is to partition this dataset into $m$ disjoint subsets or clusters $
S_1,...,S_m
$, while optimizing a clustering criterion. The most commonly employed clustering criterion is the squared Euclidean distance between each data point $
\pmb x_i
$  and the centroid $
\pmb t_k
$ (cluster center) associated with a subset $
S_k
$  of data points $
\pmb x_i
$. This criterion is called clustering error and depends on the cluster centers $
\pmb t_1,...,\pmb t_m
$:
\begin{equation}
    E\left( \pmb t_1,...,\pmb t_m \right) =\sum_{i=1}^l{\sum_{k=1}^m{h\left( \pmb x_i\in S_k \right)}}|\pmb x_i-\pmb t_k|^2,
\end{equation}
where $
h\left( \Omega \right) =1
$ if $\Omega$ is true and 0 otherwise \cite{likas2003global}.

As the $K$-means algorithm requires a predefined number of clusters, which may vary for different datasets, in this article, we consider the number of clusters as a parameter and incorporate it into our experiments.

\subsection{Learning model and solution}\label{subsec333333}

After performing $K$-means clustering on the training set, we obtain $m$ granules denoted as $
S_1,...,S_m
$, with corresponding labels $
Y_1,...,Y_m
$. Let $
l_1,...,l_m
$ represent the number of data points in each granule. 
We construct a corresponding statistical invariant for each granule. Thus, $m$ granules can construct $m$ statistical invariants. Here we use the $\pmb v$ vector \cite{MXZ2023} as a predicate to construct statistical invariant.


In specific, in the space of $\hat{\pmb x}$ (here, $\hat{\pmb x}$ is any sample from the distribution, regardless of whether it is labeled or continuous.), for each training sample $\pmb x_i$, we compute
\begin{equation}\label{30}
    v (\pmb x_i)=\int{G\left( \hat{\pmb x}-\pmb x_i \right)}d\mu \left( \hat{\pmb x} \right), i=1,...,l.
\end{equation}
Each sample is associated with a corresponding $\pmb v$ value, as demonstrated in equation (\ref{30}), and for a total of $l$ samples, we obtain an $l\times 1$ dimensional $\pmb v$ vector :
$
\pmb v=\left(v(\pmb x_1),v(\pmb x_2),...,v(\pmb x_l) \right)^{T}
$. So after clustering, the $\pmb v$ vector can be derived as
\begin{align*}
    &\pmb v_1=\left( v\left( \pmb x_1 \right) ,v\left( \pmb x_2 \right) ,...,v\left( \pmb x_{l_1} \right) \right)^{T}, \,\,\,\,\pmb x_i\in S_1,\\
    &\pmb v_2=\left( v\left( \pmb x_1 \right) ,v\left( \pmb x_2 \right) ,...,v\left( \pmb x_{l_2} \right) \right)^{T}, \ \ \pmb x_i\in S_2,\\               
    &\dots\\
    &\pmb v_m=\left( v\left( \pmb x_1 \right) ,v\left( \pmb x_2 \right) ,...,v\left( \pmb x_{l_m} \right) \right)^{T}, \,\,\,\,\pmb x_i\in S_m.
\end{align*}

Note that the $v(\pmb x_i)$ is the integral over $\hat{\pmb x}$ for the training sample $\pmb x_i$ in the distribution.
Therefore, $\pmb v$ can be obtained through both parameter estimation and non-parameter estimation methods mentioned above. In the equation provided, $\pmb x_i$ represents the training sample with its corresponding label $y_i$, and $\hat{\pmb x}$ can be any value from the unknown distribution $P(\hat{\pmb x})$. From this perspective, it may be more accurate to estimate the relative positions of the estimated training sets by incorporating prior probabilities or samples with unknown labels, rather than solely estimating the training set. Additionally, through this approach, it is possible to establish a relationship between the training set (labeled samples) and the test set (unlabeled samples) \cite{MXZ2023}.

Therefore, $\pmb v_{i}$ can be used to construct the respective statistical invariant for each granule and the corresponding $m$ statistical invariants can be obtained as follows:
\begin{align*}
 \sum_{i=1}^{l_1}{v \left( \pmb x_i \right) f\left( \pmb x_i \right)}&=\sum_{i=1}^{l_1}{v \left( \pmb x_i \right) y_i},\quad \pmb x_i\in S_1,\\
 \sum_{i=1}^{l_2}{v \left( \pmb x_i \right) f\left( \pmb x_i \right)}&=\sum_{i=1}^{l_2}{v \left( \pmb x_i \right) y_i},\quad  \pmb x_i\in S_2,\\ 
                                                              &\dots\\
 \sum_{i=1}^{l_m}{v \left( \pmb x_i \right) f\left( \pmb x_i \right)}&=\sum_{i=1}^{l_m}{v \left( \pmb x_i \right) y_i},\quad  \pmb x_i\in S_m.\\
\end{align*}
In oder to solve the above equations in the set of functions $
\left\{ f\left( \pmb x \right) \right\} 
$, we minimize the distance between each granule: 
\begin{align*}
    \rho _{k}^{2} &=\left( \sum_{i=1}^{l_k}{v \left( \pmb x_i \right) f\left( \pmb x_i \right)}-\sum_{j=1}^{l_k}{v \left( \pmb x_j \right) y_j} \right) ^2 \\
                  &=\sum_{i,j=1}^{l_k}{f\left( \pmb x_i \right) f\left( \pmb x_j \right)}V_k(i,j)-2\sum_{i,j=1}^{l_k}{f\left( \pmb x_i \right) y_j}V_k(i,j) +\sum_{i,j=1}^{l_k}{y_iy_jV_k(i,j)} \\
                  &=\left( F\left( f_k \right) -Y_k \right)^{T} V_k(i,j)\left( F\left( f_k \right) -Y_k \right),
\end{align*}
where $\pmb x_i\in S_i$, and $V_k=\pmb {v}_k\pmb {v}_k^{T}$.

Then, we get the function:
\begin{align*}
\rho ^2&=\rho _{1}^{2}+\rho _{2}^{2}+...+\rho _{m}^{2}\\
       &=\sum_{k=1}^m{\left( F\left( f_k \right) -Y_k \right) }^{T}V_k\left( F\left( f_k \right) -Y_k \right).       
\end{align*}

To obtain a linear solution, we set 
\begin{equation}\label{31}
    f\left( \pmb x \right) = \pmb w^T\pmb x+b.
\end{equation}
We use a regularization method to estimate the conditional probability function. Therefore, the function to be minimized is: 
\begin{equation}\label{32}
    R_k\left( f \right) =\rho_k ^2+\gamma W\left( f \right).
\end{equation}

By substituting equation (\ref{31}) into equation (\ref{32}) and replacing the variables with $\pmb w$ and $b$, the objective function of each granule is obtained as
\begin{equation}
    R_k\left( \pmb w,b \right) =\left( X_k\pmb w+b\pmb 1_{l_k} \right) ^TV_k\left( X_k\pmb w+b\pmb 1_{l_k} \right) -2\left( X_k\pmb w+b\pmb 1_{l_k} \right) ^TV_kY_k+Y_k^{T}V_kY_k+\gamma \pmb w^T\pmb w,
\end{equation}
where the $\gamma $ represents the parameter for the regularization item.
The final objective function is obtained by accumulation:
\begin{align*}
R\left( \pmb w,b \right) &=\sum_{k=1}^m{R_k\left( \pmb w,b \right)}\\
                    &=\sum_{k=1}^m{\left( X_k\pmb w+b\pmb 1_{l_k} \right) ^TV_k\left( X_k\pmb w+b\pmb 1_{l_k} \right) -2\left( X_k\pmb w+b\pmb 1_{l_k} \right) ^TV_kY_k+Y_k^{T}V_kY_k+\gamma m\pmb w^T\pmb w}.
\end{align*}
The necessary conditions of minimum are
\begin{equation}\label{34}
\left\{ \begin{array}{l}
	\frac{\partial R\left( \pmb w,b \right)}{\partial \pmb w}=\sum\limits_{k=1}^m{X_{k}^{T}V_kX_k\pmb w+bX_{k}^{T}V_k\pmb 1_{l_k}-X_{k}^{T}V_kY_k+\gamma m\pmb w=\pmb 0},\\
	\frac{\partial R\left( \pmb w,b \right)}{\partial b}=\sum\limits_{k=1}^m{\pmb 1_{l_k}^{T}V_kX_k\pmb w+b\pmb 1_k^{T}V_k\pmb 1_{l_k}-\pmb 1_{l_k}^{T}V_kY_k=0}.\\
\end{array} \right. 
\end{equation}
From equation (\ref{34}), we obtain
\begin{equation}\label{35}
    \pmb w=\left[ \sum_{k=1}^m{\left( X_{k}^{T}V_kX_k+\gamma I \right)} \right] ^{-1}\left[ \sum_{k=1}^m{\left( X_{k}^{T}V_kY_k-bX_{k}^{T}V_k\pmb 1_{l_k} \right)} \right].
\end{equation}
We then compute vectors
\begin{equation}\label{36}
    \pmb w_b=\left[ \sum_{k=1}^m{\left( X_{k}^{T}V_kX_k+\gamma I \right)} \right] ^{-1}\left[ \sum_{k=1}^m{X_{k}^{T}V_kY_k} \right] ,\end{equation}
\begin{equation}\label{37}
    \pmb w_c=\left[ \sum_{k=1}^m{\left( X_{k}^{T}V_kX_k+\gamma I \right)} \right] ^{-1}\left[ \sum_{k=1}^m{X_{k}^{T}V_k\pmb 1_{l_k}} \right].
\end{equation}
According to (\ref{35}), the desired vector $\pmb w$ has the form
\begin{equation}\label{38}
    \pmb w=\pmb w_b-b\pmb w_c.\end{equation}
By substituting equation (\ref{38}) into the second equation of (\ref{34}), we obtain:
\begin{equation}
b=\frac{\left[ \sum\limits_{k=1}^m{\pmb 1_{l_k}^{T}V_kY_k-\sum\limits_{k=1}^m{\pmb 1_{l_k}^{T}V_kX_k}\cdot \pmb w_b} \right]}{\left[\sum\limits_{k=1}^m{\pmb 1_{l_k}^{T}V_k\pmb 1_{l_k}}-\sum\limits_{k=1}^m{\pmb 1_{l_k}^{T}V_kX_k}\cdot \pmb w_c\right]}.
\end{equation}
 Putting $b$ in (\ref{38}), we obtain the desired parameter $\pmb w$. The desired function $f(\pmb x)$ has the form (\ref{31}). After the model is solved, we use $\theta(f(\pmb x)-0.5)$ to predict the labels of sample points.

\subsection{Nonlinear LUGSI}\label{subsec222}

For the nonlinear LUGSI, we have $ f\left( \pmb x \right) =\pmb A^T \mathcal{K}\left( \pmb x \right) +c$ and the classification rule is denoted as $\theta(f(\pmb x) - 0.5)$. Substituting $f(\pmb x)$ into equation (\ref{32}), the independent variables of the objective function become $\pmb A$ and $c$. Then the objective function of the $k$-th granule LUGSI in the kernel space can be described as follows:
\begin{equation}
    R_k\left( \pmb A,c \right) =\left( K_k\pmb A+c\pmb 1_{l_k} \right) ^TV_k\left( K_k\pmb A+c\pmb 1_{l_k} \right) -2\left( K_k\pmb A+c\pmb 1_{l_k} \right) ^TV_kY_k+Y_k^{T}V_kY_k+\gamma \pmb A^T\pmb A.
\end{equation}

It is worth mentioning that $K_k$ represents the kernel matrix between the $k$-th granule and the training set. In these notations, the regularization term $W$ can be written as $W=\pmb A^T\pmb A$. By accumulating $R_k\left( \pmb A,c \right) $, we obtain the final objective function
 \begin{align*}
 R\left( \pmb A,c \right) &=R_1\left( \pmb A,c \right) +R_2\left( \pmb A,c \right) +,...,+R_m\left( \pmb A,c \right) \\
                   &=\sum_{k=1}^m{\left( K_k\pmb A+c\pmb 1_{l_k} \right) ^TV_k\left( K_k\pmb A+c\pmb 1_{l_k} \right) -2\left( K_k\pmb A+c\pmb 1_{l_k} \right) ^TV_kY_k+Y_k^{T}V_kY_k+\gamma m\pmb A^T\pmb A}.
 \end{align*}
 The solution process is the same as described in Section \ref{subsec333333} and is summarized briefly here.
The necessary conditions for the minimum are
 \begin{equation}
    \left\{ \begin{array}{l}
        \frac{\partial R\left( \pmb A,c \right)}{\partial \pmb A}=\sum\limits_{k=1}^m{K_{k}^{T}V_kK_k\pmb A+{cK_{k}^{T}V_k\pmb 1_{l_k}-{K_{k}^{T}V_kY_k}+\gamma m\pmb A= \pmb 0}},\\
        \frac{\partial R\left( \pmb A,c \right)}{\partial c}=\sum\limits_{k=1}^m{\pmb 1_{l_k}^{T}V_kK_k\pmb A+c\pmb 1_k^{T}V_k\pmb 1_{l_k}-\pmb 1_{l_k}^{T}V_kY_k=0,}\\
    \end{array} \right. 
 \end{equation}
 where $\pmb A=\pmb A_b-c\pmb A_c$, and
\begin{equation}\label{1234}
    \left\{ \begin{array}{l}
        \pmb A_b=\left[ \sum\limits_{k=1}^m{\left( K_{k}^{T}V_kK_k+\gamma I \right)} \right] ^{-1}\sum\limits_{k=1}^m{K_{k}^{T}V_kY_k},\\
        \pmb A_c=\left[ \sum\limits_{k=1}^m{(K_{k}^{T}V_kK_k+\gamma I)} \right] ^{-1}\sum\limits_{k=1}^m{K_{k}^{T}V_k\pmb 1_{l_k}},\\
        c=\frac{\left[ \sum\limits_{k=1}^m{\pmb 1_{l_k}^{T}V_kY_k-\sum\limits_{k=1}^m{\pmb 1_{l_k}^{T}V_kK_k\cdot\pmb  A_b}} \right]}{\left[\sum\limits_{k=1}^m{\pmb 1_{l_k}^{T}V_k\pmb 1_{l_k}-\sum\limits_{k=1}^m{\pmb 1_{l_k}^{T}V_kK_k\cdot \pmb A_c}}\right]}.\\
    \end{array} \right. 
\end{equation}

The whole procedure is described
in Algorithm 1.
\begin{center}
    \resizebox{\textwidth}{!}{
    \begin{tabular}{lllll}
        \toprule
        \noindent{\pmb{Algorithm~1} LUGSI algorithm}\\
        \midrule
        \textbf{Input}: Training dataset $T=\{\left( \pmb x_1,y_1 \right) ,...,\left( \pmb x_l,y_l \right)\}$, parameters $C$, $\gamma$, $m$ and $\delta$.\\
        \textbf{Process}: \\
        
        ~~{1.~}Perform $K$-means clustering on the training set $T$, dividing it into $m$ granules;\\
        
        ~~{2.~}\pmb{for} $i=1,..., m$ :\\
        
        ~~{3. }  Calculate the $V_k$ matrix and the kernel matrix $K_k$  corresponding to each granule;\\
        
        ~~{4. }  Set intermediate variable $
        E_i=K_{k}^{T}V_kK_k+\gamma I
        $, $
        F_i=K_{k}^{T}V_kY_k
        $, $
        H_i=K_{k}^{T}V_k\pmb{1}_{l_k}
        $, \\
        ~~{~ ~}  $
        J_i=\pmb{1}_{l_k}^{T}V_kY_k
        $, $
        L_i=1_{l_k}^{T}V_kK_k
        $, $
        R_i=\pmb{1}_{l_k}^{T}V_k\pmb{1}_{l_k}
        $ and perform cumulative summation;\\
        
        ~~{5. }\pmb{end}\\

        ~~{6. } Substitute the results obtained from steps 3 to step 5 into the system of equations (\ref{1234}) \\ 
        ~~{~ ~} to obtain $A_b$, $A_c$ and c.\\
        
        \textbf{Output}: $A_b$, $A_c$ and $c$.\\

        \textbf{Predict}:
         Assign $\pmb{x}$ to class $y=\theta(f(\pmb{x})-0.5)$.\\
        \bottomrule
    \end{tabular}
    }
    \label{Algorithm1}
\end{center}

Additionally, this paper introduces another enhanced method for handling nonlinear classification called discrete cosine transform learning using granularity statistical invariants (DCTLUGSI). Unlike LUGSI, DCTLUGSI incorporates the CRO kernel and DCT algorithm \cite{kafai2017croification}. This approach efficiently transforms vectors in the input space into sparse high-dimensional vectors in the feature space, striking a balance between accuracy and efficiency in nonlinear spaces. We denote $D(\pmb x_i)$ as the DCT map to compute the CRO kernel. As in the preceding derivation, where we set the slopes in the form of $\pmb w=\pmb w_b-b\pmb w_c$ or $\pmb A=\pmb A_b-c\pmb A_c$, our task of finding the slopes in the model transforms into finding $\pmb w_b$, $\pmb w_c$ or $\pmb A_b$, $\pmb A_c$. The classification rule for data points remains as 
$\theta(f(\pmb x) - 0.5)$. Therefore, we directly provide the analytical solutions for $\pmb A_b$, $\pmb A_c$ and the bias $c$ as shown below:
\begin{align*}
\left\{ \begin{array}{l}
	\pmb A_b=\left[ \sum\limits_{k=1}^{m}{\left( D\left( X_k \right) ^TV_kD\left( X_k \right) +\gamma I \right)} \right] ^{-1}\sum\limits_{k=1}^m{D\left( X_k \right) ^TV_kY_k},\\
	\pmb A_c=\left[ \sum\limits_{k=1}^m{\left( D\left( X_k \right) ^TV_kD\left( X_k \right) +\gamma I \right)} \right] ^{-1}\sum\limits_{k=1}^m{D\left( X_k \right) ^TV_k\pmb 1_{l_k}},\\
    c=\frac{\left[ \sum\limits_{k=1}^m{\pmb 1_{l_k}^{T}V_kY_k}-\sum\limits_{k=1}^m{\pmb 1_{l_k}^{T}V_kD\left( X_k \right)}\cdot \pmb A_b \right]}{\left[\sum\limits_{k=1}^m{\pmb 1_{l_k}^{T}V_k\pmb 1_{l_k}}-\sum\limits_{k=1}^m{\pmb 1_{l_k}^{T}V_kD\left( X_k \right)}\cdot \pmb A_c\right]}.\\
\end{array} \right. 
\end{align*}

DCTLUGSI combines the characteristics of the CRO kernel, which typically exhibits higher computational efficiency compared to the Gaussian kernel. This enables the reduction of computational costs and accelerates training time when dealing with large-scale datasets.

\subsection{Relationship with LUGSI, VSVM and LSSVM}\label{subsec2}


In this section, we will describe the equivalence of these three algorithms under certain conditions. These analyses contribute to a more profound understanding of the distinctive characteristics inherent in LUGSI.
\begin{proposition}
    When each point is treated as an individual class, the LUGSI model degenerates into the LSSVM model. Additionally, if we consider the $V$-matrix as an identity matrix, the VSVM model degenerates into LSSVM.
\end{proposition}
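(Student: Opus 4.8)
The plan is to verify each of the two claimed degenerations directly at the level of the objective functionals, using as a consistency check the closed-form solutions already derived in Section~\ref{subsec333333}. The reference point is that the unconstrained primal LSSVM objective in the linear case is $\sum_{i=1}^{l}\left(f(\pmb x_i)-y_i\right)^2+\gamma\,\pmb w^T\pmb w$ with $f(\pmb x)=\pmb w^T\pmb x+b$, and its kernelized counterpart is $\sum_{i=1}^{l}\left(f(\pmb x_i)-y_i\right)^2+\gamma\lVert f\rVert^2$ with $f$ of the form (\ref{9}); I would recall this explicitly at the start of the proof so the target of the comparison is unambiguous.

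For the first claim, I would push the granulation to the extreme $m=l$ in which every granule is a singleton, $S_k=\{\pmb x_k\}$ with $l_k=1$ for $k=1,\dots,l$. Then $\pmb v_k$ collapses to the scalar $v(\pmb x_k)$, the granule matrix $V_k=\pmb v_k\pmb v_k^{T}$ becomes the $1\times1$ quantity $v(\pmb x_k)^2$, and the per-granule discrepancy reduces to $\rho_k^2=v(\pmb x_k)^2\left(f(\pmb x_k)-y_k\right)^2$ with no cross terms surviving. Summing over $k$ and adding the regularizer yields $R(\pmb w,b)=\sum_{k=1}^{l}v(\pmb x_k)^2\left(f(\pmb x_k)-y_k\right)^2+\gamma m\,\pmb w^T\pmb w$, which is exactly the (weighted) LSSVM objective; absorbing the constant $m=l$ into a rescaled regularization parameter and taking the predicate weight $v(\pmb x_k)\equiv1$ (so each $1\times1$ matrix $V_k$ is the identity) recovers the standard LSSVM functional verbatim. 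As a cross-check I would substitute $m=l$, $l_k=1$, $X_k=\pmb x_k^{T}$, $Y_k=y_k$ into (\ref{35})--(\ref{38}) and observe that the resulting linear system coincides with the LSSVM normal equations; the identical specialization of (\ref{1234}) handles the kernel case.

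For the second claim I would simply set $V=I$ in the VSVM risk (\ref{14}): then $R(f)=\left(F(f)-Y\right)^{T}\left(F(f)-Y\right)+\gamma W(f)=\sum_{i=1}^{l}\left(f(\pmb x_i)-y_i\right)^2+\gamma\lVert f\rVert^2$, which is precisely the kernelized LSSVM objective with $f$ given by (\ref{9}); hence the minimizer of (\ref{14}) with $V=I$ solves the LSSVM system. I would also remark, for coherence with the first part, that combining $V=I$ with singleton granules makes LUGSI, VSVM and LSSVM all coincide.

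The computations themselves are immediate; the only point requiring care — the main obstacle, such as it is — is bookkeeping about the normalization of LSSVM. The factor $\gamma m$ that LUGSI accumulates and the possibly nonconstant predicate weights $v(\pmb x_k)$ mean that the degeneration is an equality of optimization problems only after a harmless reparametrization of $\gamma$ and under the natural convention that a singleton granule carries weight $v\equiv1$; I would state this convention explicitly, after which the equivalence is a line or two of algebra.
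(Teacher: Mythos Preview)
Your proposal is correct and follows essentially the same route as the paper: both arguments reduce the objective functional directly by taking singleton granules with $v(\pmb x_k)\equiv 1$ for the first claim and setting $V=I$ in (\ref{14}) for the second. Your version is in fact more careful than the paper's, which omits the regularization term from the comparison and does not mention the $\gamma m$ rescaling or the weighting by $v(\pmb x_k)^2$ that you flag; the cross-check against the normal equations (\ref{35})--(\ref{38}) and (\ref{1234}) is extra work the paper does not do.
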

\begin{proof}    
    Evidently, neglecting the relative positional relationships between sample points within each granule occurs when treating each point as a class and setting the $\pmb v$ value of each sampling point to 1. In this case, the target functional of LUGSI can be rewritten as: 
\begin{align*}
R\left( f \right) &=\sum_{i=1}^l{\left( F\left( f_i \right) -Y_i \right)^{T} \left( F\left( f_i \right) -Y_i \right)}\\
                  &=\left( F\left( f \right) -Y \right)^{T} I\left( F\left( f \right) -Y \right). 
\end{align*}

Moreover, the target functional of VSVM is 
$$
R\left( f \right) =\left( F\left( f \right) -Y \right)^{T} V\left( F\left( f \right) -Y \right). 
$$
As above, the $\pmb v$ value at each point is not considered, so that $V = I$, the target functional of VSVM can be rewritten as 
\begin{align*}
R\left( f \right) &=\left( F\left( f \right) -Y \right)^{T} V\left( F\left( f \right) -Y \right)\\
                  &= \left( F\left( f \right) -Y \right)^{T} I\left( F\left( f \right) -Y \right).\qedhere
\end{align*}\end{proof}

In summary, LSSVM can be viewed as a special case of both LUGSI and VSVM when the $\pmb v$ value at each point is not considered. Therefore, LSSVM does not incorporate data structure information into its design, this may lead to its relatively poorer performance compared to LUGSI and VSVM in complex classification problems.

\begin{proposition}
    If we consider the entire dataset as a single class, then the LUGSI model degenerates into the VSVM model.
\end{proposition}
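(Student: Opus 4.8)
The plan is to mirror the reasoning used for Proposition 1: specialize the general LUGSI functional to the degenerate partition and check that, term by term, it becomes the VSVM functional (\ref{14}). First I would take the partition to be trivial, i.e.\ $m=1$ and $S_1=T$, so that $l_1=l$, $X_1=X$ is the full design matrix, $Y_1=Y$, the predicate vector is the full $V$-vector $\pmb v_1=\pmb v$, and $V_1=\pmb v\pmb v^{T}$. Under this choice the accumulated objective $R(\pmb w,b)=\sum_{k=1}^{m}R_k(\pmb w,b)$ reduces to its single summand and the regularization weight $\gamma m$ collapses to $\gamma$, giving
\[
R(\pmb w,b)=\left( X\pmb w+b\pmb 1_l \right)^{T}V_1\left( X\pmb w+b\pmb 1_l \right)-2\left( X\pmb w+b\pmb 1_l \right)^{T}V_1Y+Y^{T}V_1Y+\gamma\pmb w^{T}\pmb w.
\]

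Next I would use $F(f)=X\pmb w+b\pmb 1_l$ in the linear case (and $F(f)=K\pmb A+c\pmb 1_l$ in the kernel case) to recombine the three $V_1$-terms into a single quadratic form, obtaining
\[
R(f)=\left( F(f)-Y \right)^{T}V_1\left( F(f)-Y \right)+\gamma W(f),
\]
which is precisely the VSVM functional (\ref{14}) with the $V$-matrix taken to be $V_1$. Consequently the necessary conditions (\ref{34}) lose their outer summation and reduce to the VSVM normal equations, and the closed-form solution (\ref{35})--(\ref{38}) (respectively (\ref{1234}) in the nonlinear case) reduces to that of VSVM. This is the precise sense in which LUGSI degenerates into VSVM, and it parallels the passage from VSVM to LSSVM in Proposition 1.

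The step requiring the most care is the identification of $V_1$ with ``the $V$-matrix'' of VSVM. In LUGSI the matrix attached to a granule is the rank-one outer product $\pmb v_k\pmb v_k^{T}$ of the predicate vector, whereas a general VSVM $V$-matrix has entries $v(\pmb x_i,\pmb x_j)=\int G(\pmb x-\pmb x_i)G(\pmb x-\pmb x_j)\,d\mu(\pmb x)$ and need not be rank one. The honest statement is therefore that LUGSI with a single granule coincides with a VSVM whose invariant matrix equals $\pmb v\pmb v^{T}$; the degeneration is at the level of the functional form --- a positive semidefinite quadratic in $F(f)-Y$ plus Tikhonov regularization --- and of the resulting estimator. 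I would make this scope explicit so that ``degenerates'' is not over-read, exactly as the proof of Proposition 1 argues with the abstract functional $R(f)=(F(f)-Y)^{T}M(F(f)-Y)$ rather than a particular $M$.
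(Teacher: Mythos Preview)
Your approach is essentially the same as the paper's: set $m=1$ so that the sum in the LUGSI functional collapses to a single term of the VSVM form $(F(f)-Y)^{T}V_1(F(f)-Y)+\gamma W(f)$. The paper's proof is in fact much terser than yours --- it simply writes ``$V_i=V$'' and collapses the sum in one line, without expanding into the $(\pmb w,b)$ parametrization or discussing the normal equations.

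Where you differ is in the final paragraph, and this is worth noting: you are more careful than the paper. The paper asserts $V_i=V$ without comment, but as you observe, LUGSI's granule matrix is the rank-one outer product $V_1=\pmb v\pmb v^{T}$, whereas the VSVM $V$-matrix has entries $v(\pmb x_i,\pmb x_j)=\int G(\pmb x-\pmb x_i)G(\pmb x-\pmb x_j)\,d\mu(\pmb x)$, which is not in general the product $v(\pmb x_i)v(\pmb x_j)$. Your qualification --- that the degeneration is at the level of the functional form (a PSD quadratic in $F(f)-Y$ plus Tikhonov regularization), and that single-granule LUGSI matches VSVM with the particular choice $V=\pmb v\pmb v^{T}$ --- is the honest statement. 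The paper's proof silently identifies the two matrices; your version makes the scope explicit, which is an improvement rather than a deviation.
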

\begin{proof}
If we consider the entire dataset as a single class, then $V_i = V$, so LUGSI can be transformed to 
\begin{align*}
    R\left( f \right) &=\sum_{i=1}^m{\left( F\left( f_i \right) -Y_i \right)^{T} V_i\left( F\left( f_i \right) -Y_i \right)}\\
                      &=\left( F\left( f \right) -Y \right)^{T} V\left( F\left( f \right) -Y \right).\qedhere 
\end{align*}
\end{proof}

Therefore, it can be inferred that VSVM does not explicitly incorporate the structural information of the data into the model. Although VSVM considers the mutual positional information between the entire dataset, this positional information may not always be helpful for accurate classification. This observation can be further validated through subsequent experiments. Furthermore, LUGSI captures more accurate positional information within clusters. LUGSI constructs a corresponding statistical invariant for each granule by introducing the $\pmb v$ vector, leading to a deeper understanding of the intra-granularity structure. This may result in better classification and generalization performance compared to VSVM. We will delve into these issues in more detail in the experimental section.

\subsection{Model complexity analysis}\label{subsec2}

For the linear solution, the model solution complexity for VSVM is $O\left( 2dl^2+2d^2l \right)$, while the complexity for LUGSI is $O\left( 2mdl_i^2+md^2l_i+md^2 \right) $. However, since the number of data points in each cluster may vary, we can approximate it as the average number $\frac{l}{m}
$ of data points per cluster for comparison purposes instead of using $l_i$. At this point, the complexity of solving LUGSI can be approximated as $O\left( 2\frac{dl^2}{m}+d^2l+md^2 \right) $. When $m$ is relatively large, $$O\left( 2\frac{dl^2}{m}+d^2l+md^2 \right) \ll   O\left( 2dl^2+2d^2l \right)$$  inequalities hold. 

Therefore, the complexity of LUGSI is much lower than that of VSVM.


\section{Experiments}\label{sec4}
To evaluate the proposed LUGSI algorithm, a series of experiments are conducted in this section. In this paper, four related SVM-type classifiers are used for experimental comparison with the
proposed LUGSI and DCTLUGSI. 
\begin{itemize}
    \item VSVM \cite{vapnik2015v}: VSVM is a model constructed based on the Fredholm integral equation and least squares loss. It considers the mutual positional relationship of the data points in its formulation.
    \item CSVM \cite{2020Deformation,chaabane2022face}: CSVM is a classic SVM with hinge loss. It aims to maximize the margin between the closest points of different categories in order to achieve effective classification.
    \item LSSVM \cite{li2016least,islam2021data}: LSSVM is the least squares version of a support vector machine that obtains a solution by solving a set of linear equations.
    \item epsVL1SVM \cite{MXZ2023}: epsVL1SVM is a model based on the Fredholm integral equation that introduces a new theory of expected risk estimation and utilizes insensitive losses. The model also considers the mutual positional relationship of the data points.
    \item DCTLUGSI (section(\ref{subsec222})): DCTLUGSI is another set of nonlinear solutions obtained by combining LUGSI with the DCT algorithm.
  \end{itemize}

  \begin{table}[]
    \caption{Datasets used in experiments. The first 14 datasets are from the UCI datasets, and the remaining datasets are from the NDC
    datasets.}\label{tab1}%
    \begin{tabular}{@{}llll|llll@{}}
    \toprule
    Item& Datasets & Featrues & Samples& Item&Datasets & Featrues & Samples \\
        \midrule
        1 & Echo & 10 & 131&11 & Titanic & 3 & 2201\\
        2 & Cleveland & 13 & 173&12 & Waveform & 21 & 5000\\
        3 & Hepatitis & 19 & 155&13 & Adult & 13 & 17887\\
        4 & Wine & 13 & 178&14 & covtype & 54 & 581012\\
        5 & Hourse & 26 & 300&15& n1000 & 32 & 1000\\
        6 & Haberman  & 3  & 306&16& n10000 & 32 & 10000\\
        7 & Ecoli & 7 & 336&17& n50000& 32 & 50000\\
        8 & Creadit & 15 & 690&18& n100000 & 32 & 100000\\
        9 & Vowel & 13 & 988&19 & n1000000 & 32 & 1000000\\
        10 & Shuttle & 9 & 1829\\
        \botrule
        \end{tabular}
\end{table}

\subsection{Comparison of accuracy on the benchmark datasets}

To investigate the effectiveness of our LUGSI, we evaluated its performance on several datasets from the UCI repository (\url{https://archive.ics.uci.edu/datasets}), whose attributes are shown in Table \ref{tab1}. The website provides detailed descriptions for each dataset, including comprehensive information such as the dataset's origin and feature details. Due to the diversity and easy accessibility of the UCI datasets, along with their frequent citation in academic research, they are well-suited for comparative analysis with other studies. The NDC datasets are generated by David Musicants Data Generator (\url{https://research.cs.wisc.edu/dmi/svm/ndc/}), with the number of features fixed at 32. They are primarily employed to evaluate the model's performance in handling datasets of varying scales.
In this experiment, we perform 5-fold cross-validation with a grid search strategy to optimize the parameters associated with each classifier. All features are minmax scaled to the range of $[0,1].$ During the cross-validation process, we select the tradeoff parameter C from the range $\left\{ 2^{-8},2^{-7},...,2^7,2^8 \right\}.$ For small datasets ($l<800$) in LUGSI and DCTLUGSI, the range of class numbers is chosen as $\left\{1,3,7,l/2,l\right\}$, while for larger ($l\ge 800$) datasets, it is $\left\{1,3,7,l/16,l\right\}$. In the selection of the number of classes, we conduct multiple experiments by fixing other parameters and observed the variation in model performance with different numbers of classes. As for kernel setting, the radius basic function (rbf) kernel 
$$
K_{rbf}=\exp \left( -\frac{\lVert \pmb x-\pmb x' \rVert ^2}{2\delta ^2} \right) 
$$
is used and the parameter is selected from $
\left\{ 2^{-4},2^{-3},...,2^3,2^4 \right\} 
$. Regarding the selection of parameter $C$ and $\delta$, we primarily established their range based on the expertise of domain-specific experts.
In DCTLUGSI, the CRO kernel $$
K_{\gamma}\left( \pmb x,\pmb x' \right) =\varPhi ^2\left( \gamma \right) +\int_0^{\cos \left( \pmb x,\pmb x' \right)}{\frac{\exp \left( \frac{-\gamma ^2}{1+\rho} \right)}{2\pi \sqrt{1-\rho ^2}}}d\rho 
$$ is utilized, where $\gamma$ is a constant, $\varPhi \left( \gamma \right) =\int_{-\infty}^{\gamma}{\phi \left( u \right) du}$ and $\phi \left( x \right) =\frac{1}{\sqrt{2\pi}}e^{-\frac{ x^2}{2}}$. Here, a brief introduction to the concept is provided, while the detailed computation can be found in Section 5 of \cite{kafai2017croification} .

The linear results and nonlinear results are displayed in Table \ref{tab3333} and Table \ref{tab44}, respectively, with the best accuracy and shortest training time are highlighted in bold. \textquotedblleft $-$\textquotedblright{} represents that the results are not obtained if one solver takes time longer than two hours, and \textquotedblleft $//$\textquotedblright{} represents that the required memory is out of the
    capacity of our desktop. The last row displays the number of victories of our LUGSI model compared to other classifiers.

    \begin{table}[]
        \caption{Testing results on UCI datasets for linear classifiers.}\label{tab3333}%
        \begin{tabular*}{\textwidth}{@{\extracolsep\fill}llllll@{}}
        \toprule
            Dataset & VSVM   & LSSVM & CSVM & epsVL1SVM & LUGSI \\
         & Acc ($\%$) & Acc ($\%$)&Acc ($\%$)& Acc ($\%$)&Acc ($\%$)\\
         & time (s) & time (s)&time (s)& time (s)&time (s)\\
        \midrule
        Echo & $88.21\pm 2.02$ & $87.75\pm3.48$ &  $89.47 \pm0.97$& $89.89\pm1.46$ &\pmb{$90.48\pm2.78$}\\
         & 0.0034  & 0.0820 & 0.0038 & 0.0325 & 0.0019\\
         Hepatitis & $84.12\pm1.66 $ & $ 84.67\pm1.93 $ &  $82.89\pm3.14 $ & $ 83.93\pm1.57$ &\pmb{$86.31\pm1.39 $}\\
         &0.0036   &0.1290  & 0.0060 & 0.0409  & 0.0018 \\
         Wine & $99.20\pm0.27  $ & $98.98\pm0.21$    &  $98.28\pm0.16  $ &$ 99.24\pm0.21$  &\pmb{$99.55\pm0.22  $}\\
         & 0.0041   & 0.1029  &  0.0043 &  0.0492 &  0.0098 \\
        Cleveland & $94.61\pm 0.58$ & $94.36\pm0.69$ & $94.41\pm0.46$ & $94.47\pm1.22$ &\pmb{$95.51\pm0.36$}\\
         & 0.0048 & 0.1000 & 0.0038 &0.0489 & 0.0016\\
        Hourse & $80.41\pm0.34$ & $81.07\pm0.42$ & $81.30\pm0.01$ & $81.33\pm0.00$ &\pmb{$81.76\pm0.45$}\\
         & 0.0044& 0.1701 & 0.0436 &0.1368& 0.0024\\
        Haberman  & $74.20\pm0.75$ & $ 73.98\pm0.16$  & $73.46\pm0.01$ & $73.43\pm2.87$ &\pmb{$75.29\pm0.16$}\\  
         & 0.0043 & 0.0526  & 0.0052 & 0.1484 & 0.0029\\
        Ecoli & $90.43\pm0.25$ & $90.01\pm0.14$  & $92.28\pm0.20$ & $92.04\pm0.27$ &\pmb{$92.75\pm0.36$}\\
         & 0.0051 & 0.0789 & 0.0047& 0.1669 & 0.0024\\
        Creadit & $86.26\pm0.17$ & $85.83\pm0.07$& $85.50\pm0.00$ & $85.51\pm0.00$& \pmb{$86.38\pm0.00$}\\
         & 0.0126 & 0.1501 &  0.0676& 1.3464& 0.0033\\
         Vowel & $94.88\pm0.11$ & $94.75\pm0.06$  & \pmb{$97.31\pm0.11$}  & $95.76\pm0.08$ &$94.58\pm0.03$\\
  & 0.0177 & 0.1852 & 0.0232 & 2.2970 & 0.0048\\
Shuttle & $99.95\pm0.00$ &$99.95\pm0.00$ &$99.95\pm0.00$ &$ 99.95\pm0.00$ & \pmb{$99.95\pm0.00$}\\
  & 0.0548 & 0.3835 & 0.0256 &11.4773  & 0.0159\\
Titanic & $77.60\pm0.00$ & $77.60\pm0.00$ & $77.60\pm0.00$& $77.60\pm0.00$&\pmb{$77.60\pm0.00$}\\
 & 0.0662 & 0.5175  & 0.0571 & 8.1845 & 0.0183\\
Waveform & $85.60\pm0.02$ & $85.56\pm0.03$ & \pmb{$85.78\pm0.02$} &$ 85.67\pm0.02$  & $85.14\pm0.01$\\
  & 0.4715 & 3.5974 & 1.7333 & 186.9011 & 0.0857\\
  Adult & $ -$  &$ 82.06\pm0.00$  &$ 78.88\pm0.00$  & $-$  &  \pmb{$82.08\pm0.00$}\\
  & $//$  & 82.7764  & 20.3904  & $>2h $ &0.2036 \\  
        \botrule
 $\#$Best & 0  & 0 & 2& 0 & 11 \\   
        \botrule
        \end{tabular*}
        \footnotetext{The highest accuracy (Acc) value in each experimental group is indicated in bold. It is important to note the selection criteria for Acc: if the Acc values are the same, choose the group with the smaller variance. If the variances are also the same, choose the group with the shorter training time.}
        \end{table}

\begin{table}[]
\caption{Testing results on UCI datasets for nonlinear classifiers.}
\label{tab44}%
\centering
\begin{tabular}{@{}lllllll@{}}
\toprule
Dataset & VSVM   & LSSVM & CSVM & epsVL1SVM & LUGSI & DCTLUGSI
\\
& Acc ($\%$) & Acc ($\%$)&Acc ($\%$)& Acc ($\%$)&Acc ($\%$)&Acc ($\%$)
\\
& time (s) & time (s)&time (s)& time (s)&time (s)&time (s)
\\
\midrule
Echo & $89.67\pm0.80$ & $87.64\pm0.98$ & $89.72\pm1.81$ & $90.25\pm1.01$ &$90.58\pm2.12$&\pmb{$90.93\pm0.48$}
\\
& 0.0067&0.0008 & 0.0032 & 0.0332 & 0.0135&0.0123
\\
Hepatitis & $83.61\pm3.18 $ & $84.60\pm2.69 $ & $ 83.25\pm3.53 $ & $81.32\pm6.02 $ &$85.45\pm1.65 $&$\pmb{86.66\pm0.65} $
\\
&  0.0080& 0.1322 &  0.0060 & 0.0416 &  0.0178& 0.0057
\\  
Wine & $99.56\pm0.13  $ & $ 99.66\pm0.09  $ & $99.64\pm0.27   $ & $99.61\pm0.22  $ &\pmb{$99.91\pm0.09  $}&$98.49\pm0.14 $
\\
& 0.0109&  0.1062 &   0.0043 &  0.2090 &   0.1705& 0.0044
\\   
Cleveland & $94.60\pm0.28$ & $93.39\pm0.26$ & $95.07\pm0.74$ & $94.69\pm0.55$ & \pmb{$95.74\pm0.70$}&$95.09\pm0.23$
\\
 & 0.0093 & 0.1087 & 0.0037 & 0.0508 & 0.0186&0.0089
 \\
Hourse & $81.11\pm0.18$ & $79.67\pm3.48$ &$82.67\pm0.82$ & $83.39\pm0.87$ &$82.11\pm1.93$&\pmb{$ 83.67\pm0.25$}
\\
& 0.0201 & 0.1746 & 0.0262 & 0.1455 & 0.8146&0.0127
\\
Haberman  & $74.60\pm0.57$ & $74.73\pm1.03$ & $73.31\pm0.71$ & $73.25\pm0.96$ & \pmb{$75.19\pm0.59$}&$73.17\pm2.83$
\\ 
& 0.0219 & 0.0583 & 0.0094 & 0.1767& 0.0517&0.0110
\\
Ecoli & $92.90\pm0.18$ & $92.29\pm1.00$ & $92.30\pm0.82$ & $93.11\pm0.41$ &\pmb{$93.24\pm0.29$}&$93.06\pm0.08$
\\
& 0.0250 & 0.0814 &0.0095 & 0.2016& 0.0845 &0.0100
\\
Creadit & $86.36\pm0.40$ & $85.57\pm0.20$ & $86.64\pm0.06$ & $85.74\pm0.09$&  \pmb{$86.65\pm0.43$}&$ 85.81\pm0.09$
\\
& 0.1263 & 0.1777 & 0.0719 & 1.1426&   0.2885&0.0083
\\
Vowel & $99.75\pm0.03$ & \pmb{$100.00\pm0.00$}& $99.92\pm0.01$ & $99.89\pm0.01$ &$99.92\pm0.02$ & $99.02\pm0.05$
\\
& 0.4207 & 0.2229& 0.0194 & 2.4904 & 148.3893&10.3412
\\
Shuttle & $99.96\pm0.00$ & $99.95\pm0.00$ & $99.90\pm0.00$ & $99.95\pm0.00$ & \pmb{$100.00\pm0.00$}&\pmb{$100.00\pm0.00$} 
\\
& 1.9472 & 0.4958 & 0.0085 & 10.9137 & 4.5313 & 0.1417
\\
Titanic & $78.26\pm0.13$& $78.78\pm0.12$& $78.82\pm0.04$ & $78.76\pm0.05$ & $77.92\pm0.00$&\pmb{$78.92\pm0.00$}
\\
& 3.0602 & 0.6523& 0.1352 & 10.7115 & 12.1731 &0.1319
\\
Waveform &$ \pmb{90.15\pm0.01}$ & $90.00\pm0.02$& $90.00\pm0.01$ & $89.94\pm0.02$ &$ 89.05\pm0.03$ &$86.75\pm0.02$
\\
&31.6183  & 4.2664  & 1.5425 & 208.0701 & 106.2753 &0.2314
\\
Adult &$-$ & $75.01\pm0.00$&$81.52\pm0.00 $  & $-$ & $ 76.52\pm0.00$ & \pmb{$82.58\pm0.01$}
\\
&$//$  & 97.1948  &28.4001  &$>2h $  & 2443.0999 & 36.5046
\\
\botrule
$\#$Best & 1  & 1 & 0& 0 & 6 & 6 
\\ 
\botrule
\end{tabular}
\end{table}

From these results, we can draw the following interesting observations.

    (1) The results of UCI datasets in the linear case: It can be clearly observed that out of the 13 datasets, LUGSI demonstrates superior performance on 11 datasets. Particularly, the classification performance of VSVM surpasses that of LSSVM overall, which might be attributed to the utilization of structural information by VSVM. This signifies the importance of incorporating structural information into classification tasks. While LUGSI also considers the structural aspect, it differs from VSVM by employing a clustering approach to process the data, emphasizing the influence of local information on classification. Moreover, the training time of the LUGSI model is notably superior to other models, especially when compared to LSSVM and epsVL1SVM. It is worth noting that due to the unique nature of the \textquotedblleft Shuttle\textquotedblright{} and \textquotedblleft Titanic\textquotedblright{} datasets in linear space, the accuracy fluctuations among various models are not significant for these datasets. However, the use of the LUGSI algorithm still substantially
    accelerates the training speed. In summary, compared to other methods, LUGSI significantly reduces training time and consistently maintains the highest accuracy on the majority of experimental datasets.

    (2) The results of UCI datasets in the nonlinear case: Evidently, the nonlinear LUGSI classifier exhibits superior classification performance on the majority of datasets when compared to other classifiers. It's noteworthy that in most datasets, LUGSI outperforms VSVM in classification performance, further validating the advantage of LUGSI's utilization of local data information. Moreover, DCTLUGSI effectively harnesses the distinctive attributes of the linear LUGSI model, utilizing the DCT algorithm to reduce computation time for the LUGSI model in the nonlinear space. This achievement empowers LUGSI to attain exceptional classification performance in the nonlinear space while concurrently reducing training time.

\subsection{The impact of clustering and on accuracy and training time}

To present the impact of the number of clusters on training time and accuracy for UCI and NDC datasets, and to further validate the necessity of considering the number of clusters as a parameter, we plot variation of training time and  accuracy along with number of clusters. Fig.\ref{Fig:1}, Fig.\ref{Fig:2} and  Fig.\ref{Fig:3} illustrate the results on the datasets. In this experiment, we keep the other parameters fixed and treated the number of clusters $m$, as a variable. For small datasets, we set the step size of $m$ based on the dataset size to be 10, 20, 50, and 100, respectively. For the two extremely large datasets, \textquotedblleft covtype\textquotedblright{} and \textquotedblleft n1000000\textquotedblright, we select 8 values of $m$ from the range 1 to 1000 for the former, and 10 values of $m$ from the same range for the latter, to conduct the experiments.
\begin{figure*}[]
    \centering
    \subfigure[Hepatitis]{\includegraphics[height=1.24 in]{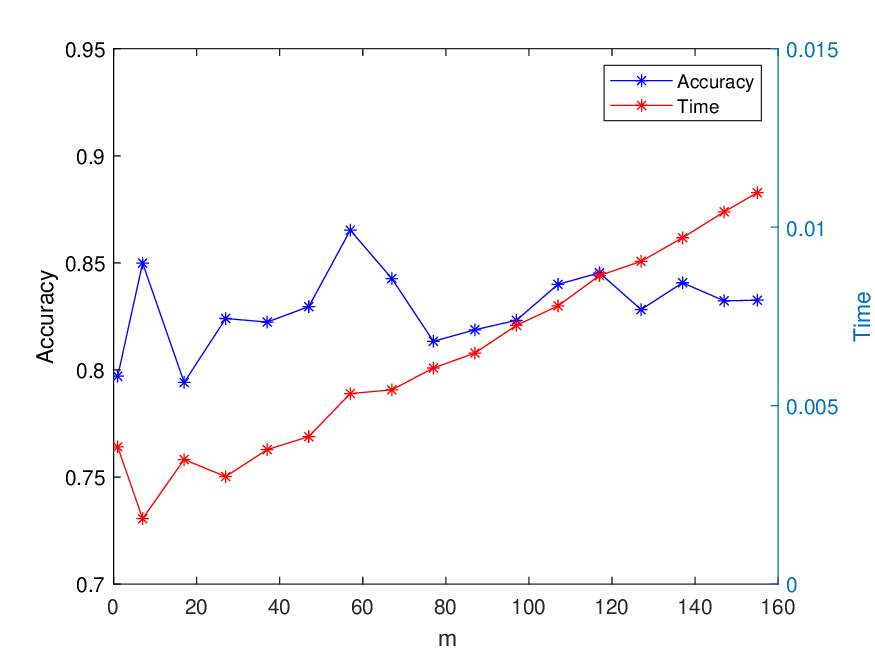}}
    \hspace{0 pt}
    \subfigure[Cleveland]{\includegraphics[height=1.24 in]{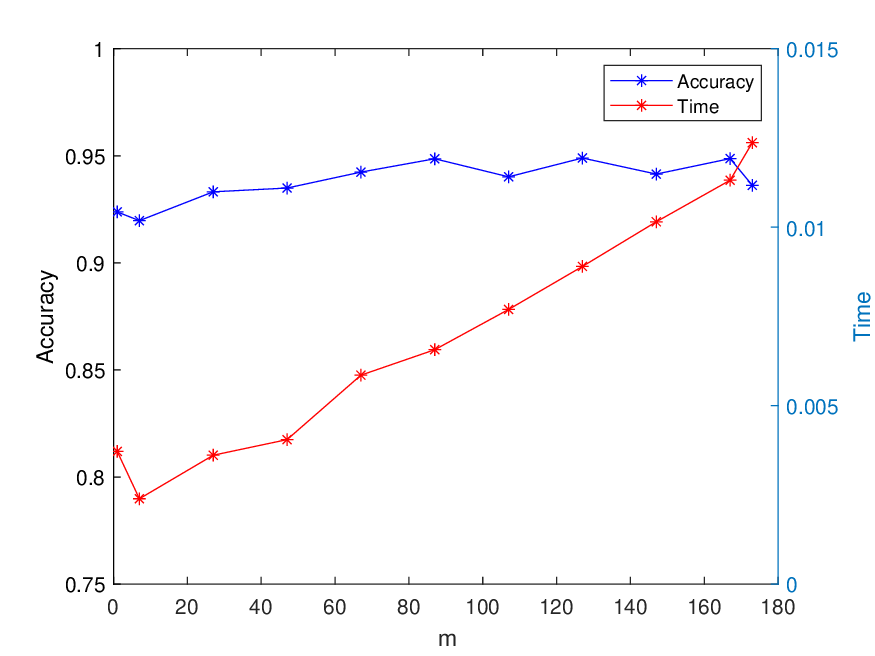}}
    \hspace{0 pt}
    \subfigure[Creadit]{\includegraphics[height=1.24 in]{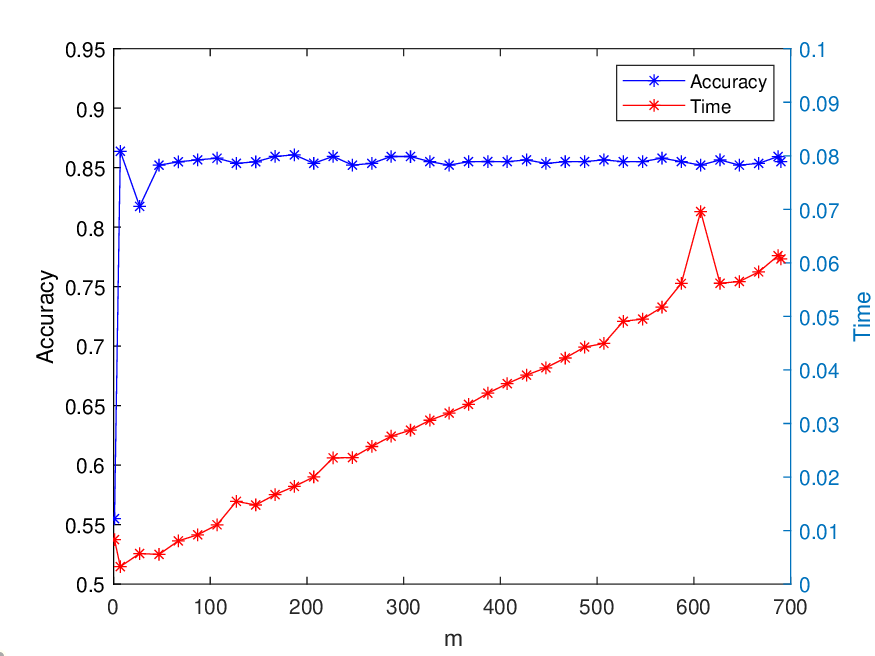}}
    \\
    \subfigure[Vowel]{\includegraphics[height=1.24 in]{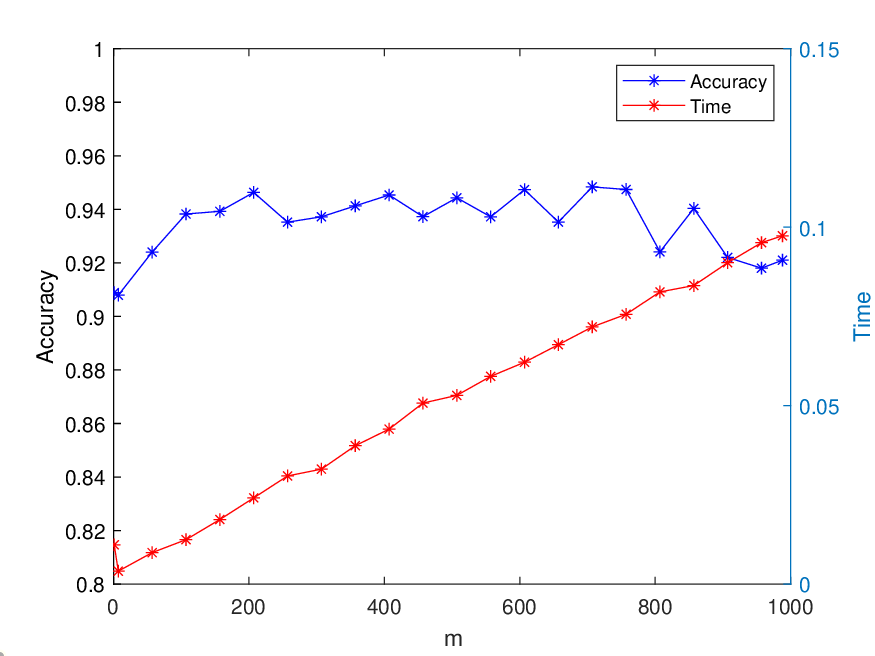}}
    \hspace{0 pt}
    \subfigure[Shuttle]{\includegraphics[height=1.24 in]{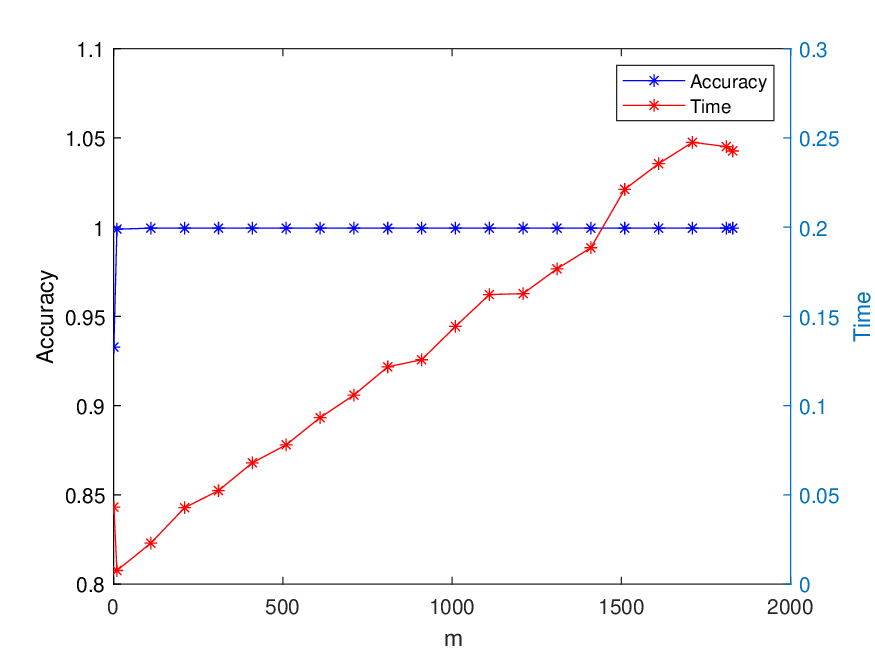}}
    \hspace{0 pt}
    \subfigure[Titanic]{\includegraphics[height=1.24 in]{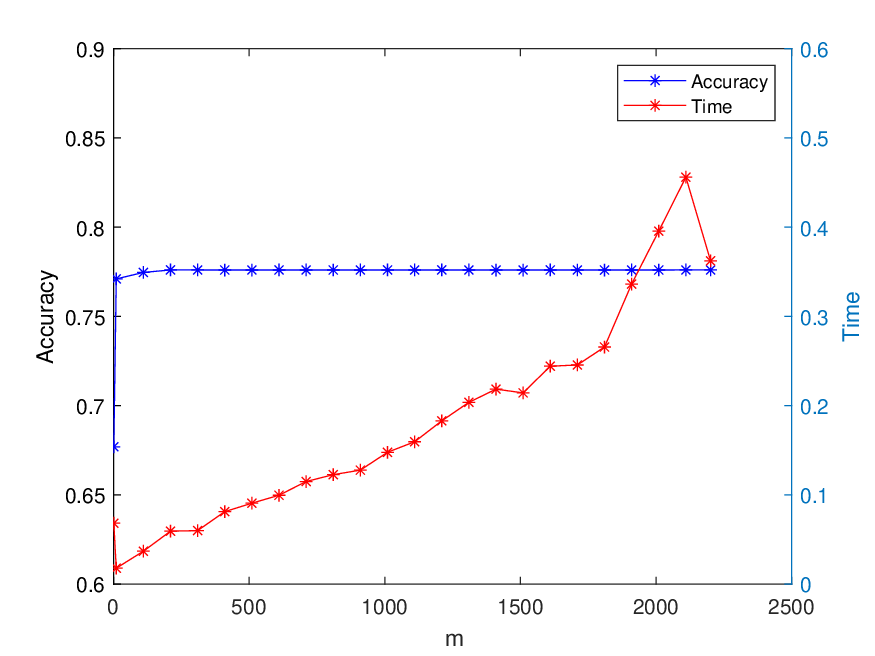}}
    \caption{The impact of clustering in linear space on accuracy and training time on small datasets from UCI.}\label{Fig:1}
\end{figure*}

\begin{figure*}[]
    \centering
    \subfigure[Hepatitis]{\includegraphics[height=1.24 in]{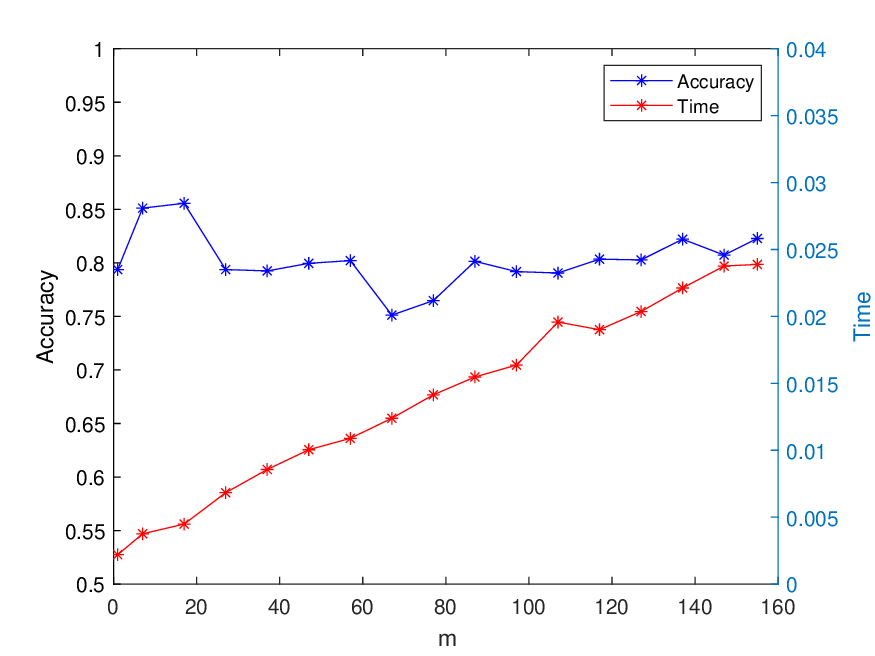}}
    \hspace{0 pt}
    \subfigure[Cleveland]{\includegraphics[height=1.24 in]{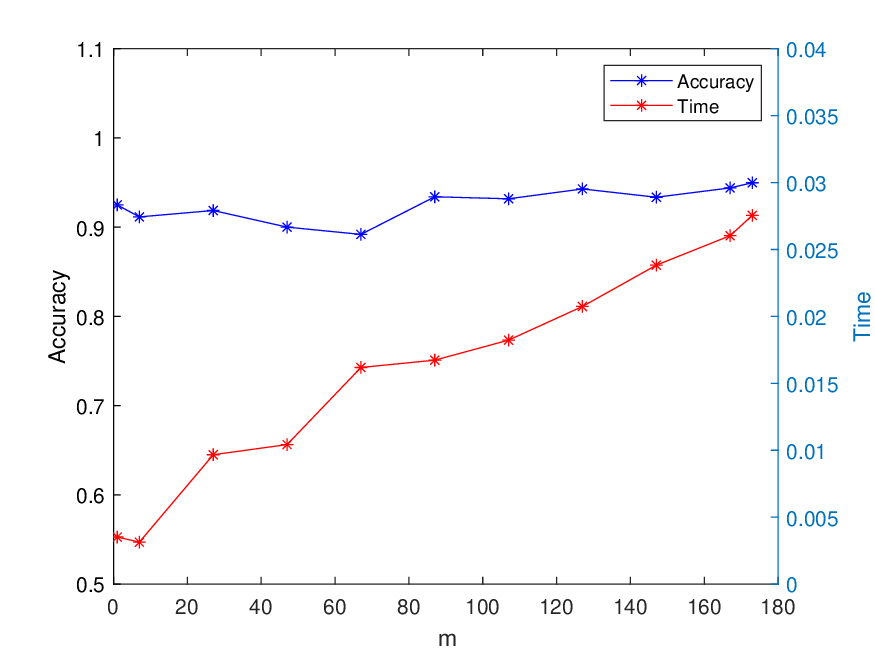}}
    \hspace{0 pt}
    \subfigure[Creadit]{\includegraphics[height=1.24 in]{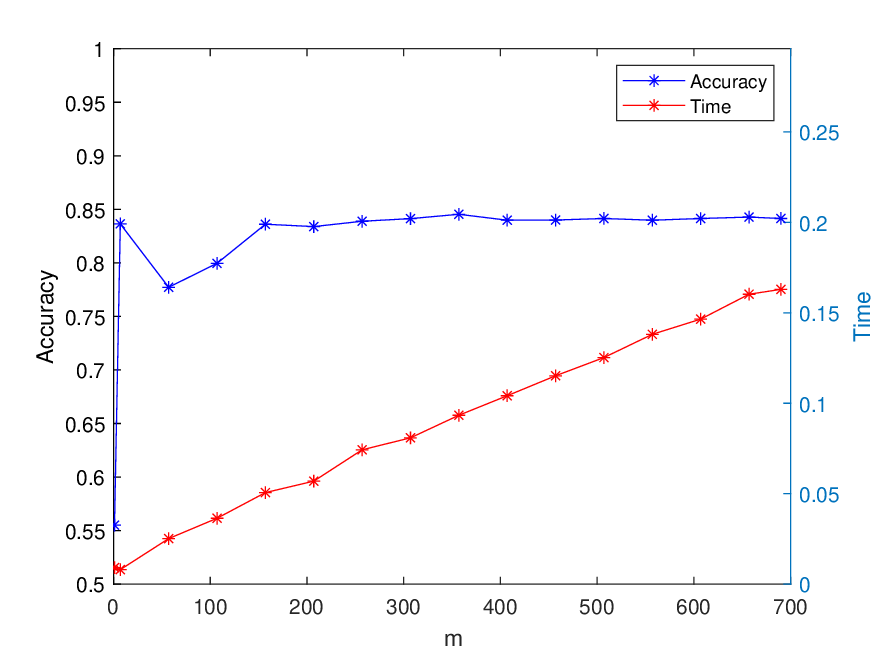}}
    \\
    \subfigure[Vowel]{\includegraphics[height=1.24 in]{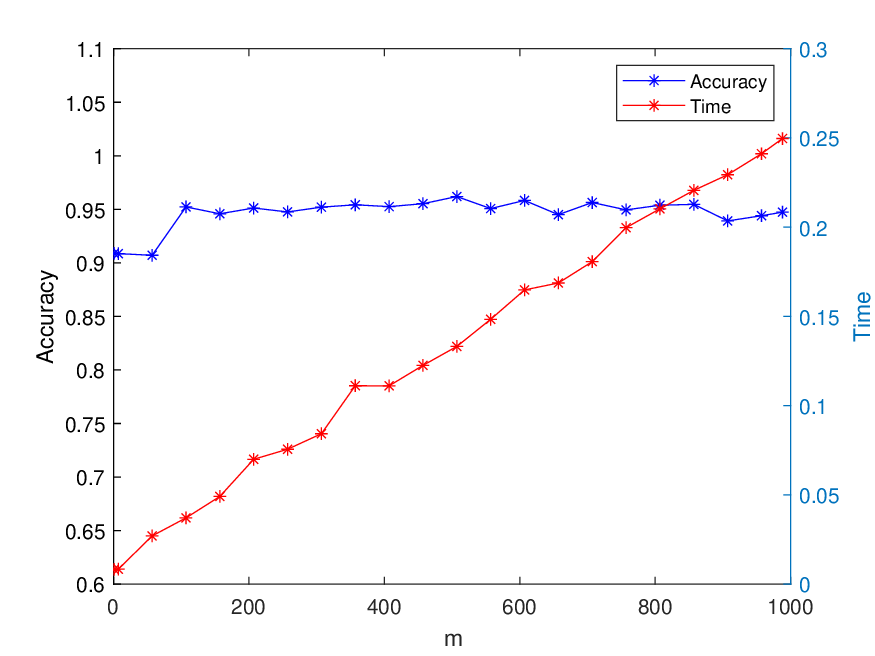}}
    \hspace{0 pt}
    \subfigure[Shuttle]{\includegraphics[height=1.24 in]{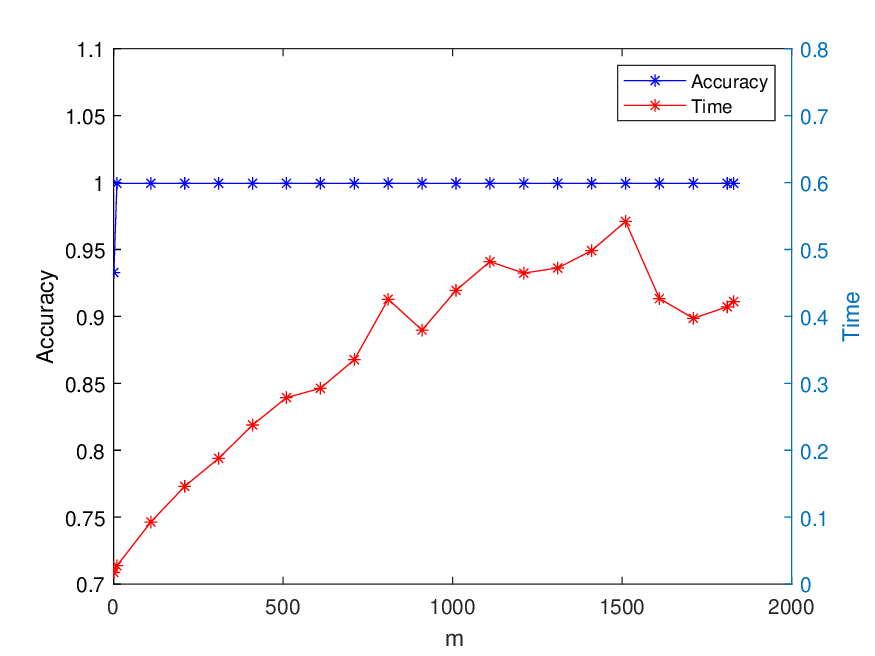}}
    \hspace{0 pt}
    \subfigure[Titanic]{\includegraphics[height=1.24 in]{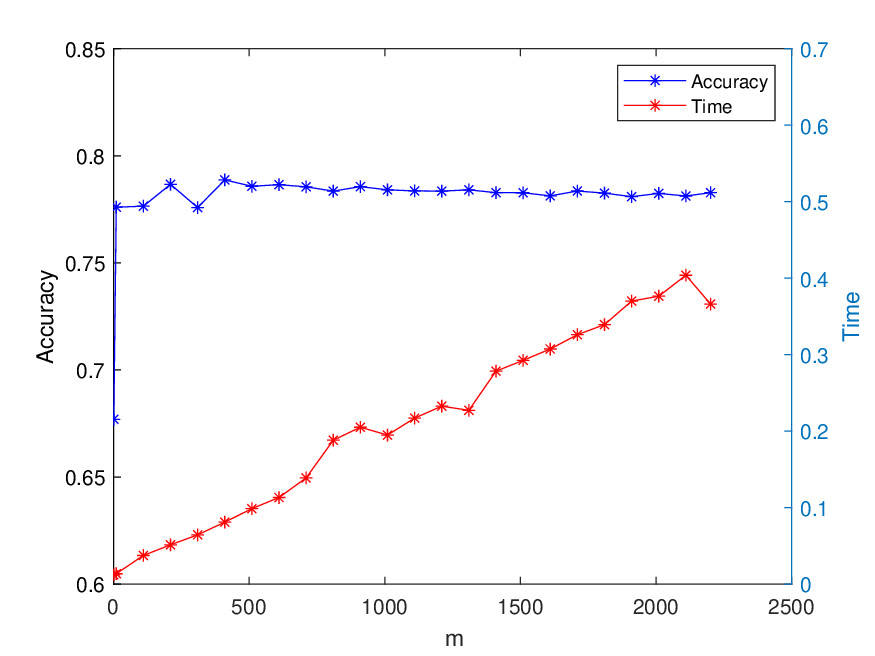}}
    \caption{The impact of clustering in nonlinear space on accuracy and training time on small datasets from UCI.}\label{Fig:2}
\end{figure*}


    \begin{figure}[htbp]
        \centering
        \subfigure[covtype.]{
        \begin{minipage}[t]{0.4\linewidth}
        \centering
        \includegraphics[width=2in]{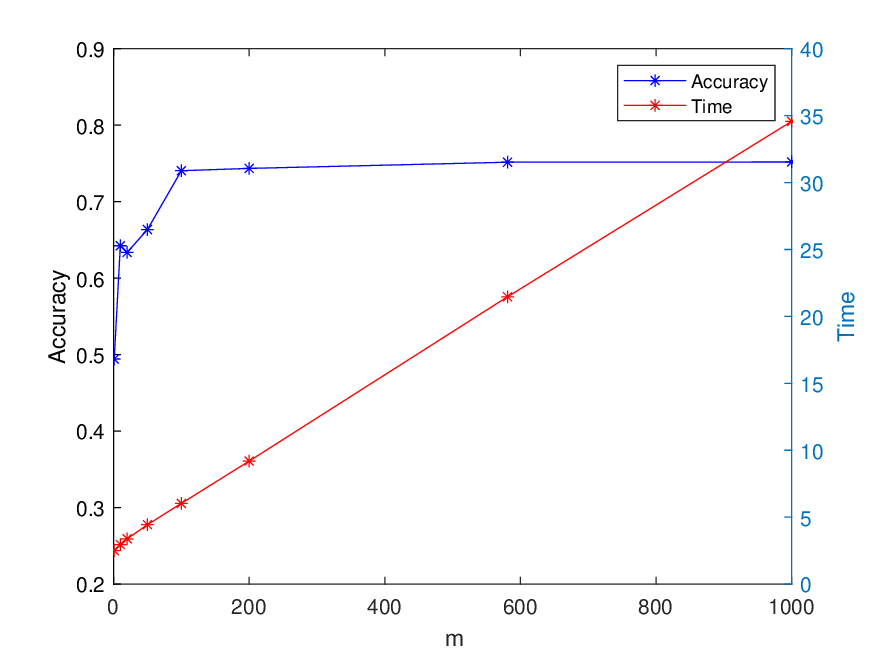}
        \end{minipage}%
        }%
        \subfigure[n1000000.]{
        \begin{minipage}[t]{0.4\linewidth}
        \centering
        \includegraphics[width=2in]{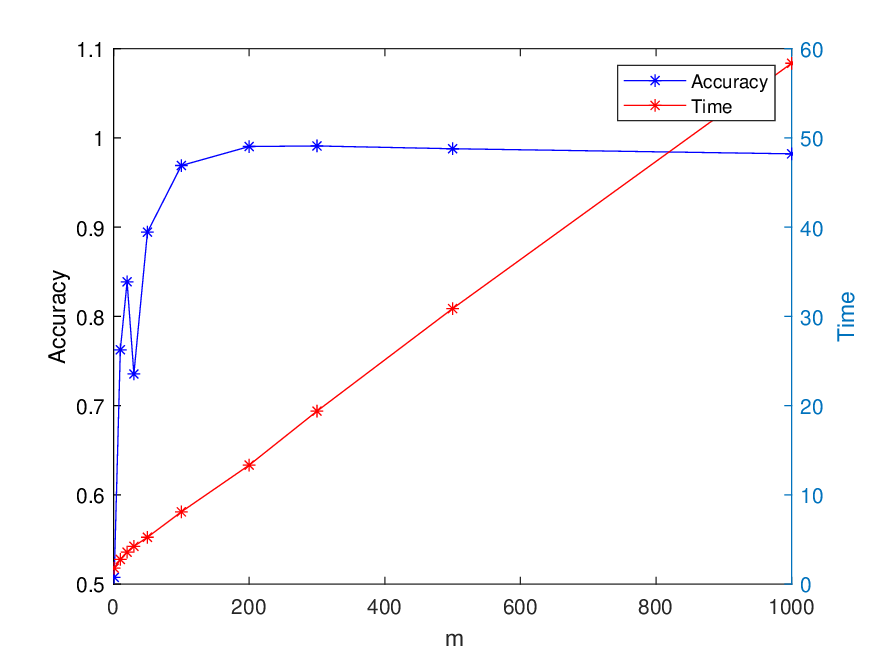}
        \end{minipage}%
        }%
        
        \centering
    \caption{ The impact of clustering on accuracy and training time of large-scale datasets.}\label{Fig:3}
    \end{figure}

(1) The impact of clustering and on accuracy and training time on small datasets form UCI: Fig.\ref{Fig:1} and Fig.\ref{Fig:2} clearly demonstrate that as the number of sample clusters increases, the training time of the classifier approximately grows linearly. Concurrently, the accuracy of the classifier fluctuates, showing both improvements and declines. These results demonstrate the importance of selecting an appropriate number of clusters for the classifier, as it can significantly enhance accuracy of LUGSI model within a certain time constraint.
    
(2) The impact of clustering and on accuracy and training time of large-scale datasets: The results presented in Fig.\ref{Fig:3} are consistent with those obtained from the small sample set. These experimental findings demonstrate the feasibility of the LUGSI model in handling large-scale datasets.

\subsection{The influence of global and local information on the accuracy and training time of NDC datasets with different scales}
In this section, we aim to validate the impact of global and local information on model training accuracy and training time. To achieve this, we apply LUGSI to different-scale NDC datasets (1000, 10000, 50000, 100000, 1000000) for experimentation. In the experiments, if no clustering is applied to the data, it signifies that LUGSI is utilizing the global information of the data; conversely, if data clustering is performed, it indicates that LUGSI is utilizing the local information of the data. The experimental results, as depicted in Fig.\ref{Fig:4}, reveal that the blue bars represent accuracy and training time obtained using global information, while the red bars represent accuracy and training time obtained using local information. 

\begin{figure}[htbp]
    \centering
    \subfigure[Accuracy.]{
    \begin{minipage}[t]{0.4\linewidth}
    \centering
    \includegraphics[width=2in]{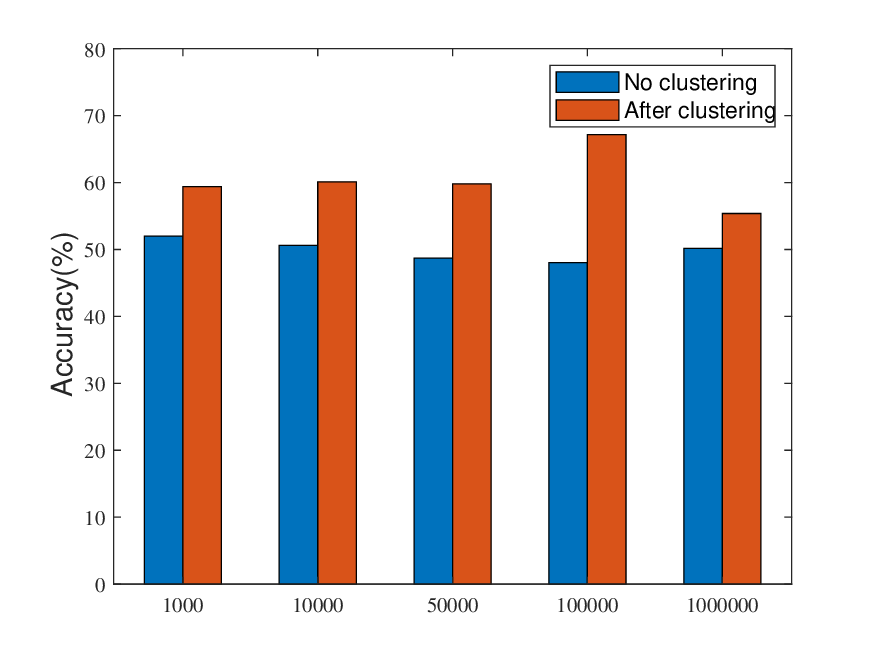}
    \end{minipage}%
    }%
    \subfigure[Training time.]{
    \begin{minipage}[t]{0.4\linewidth}
    \centering
    \includegraphics[width=2in]{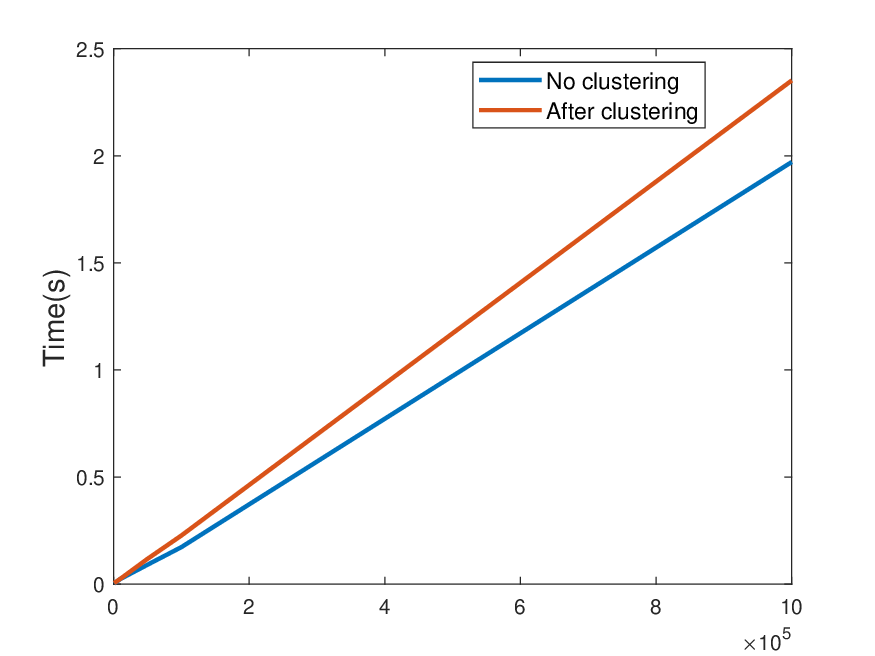}
    \end{minipage}%
    }%
    \centering
    \caption{The effect of clustering on the accuracy and training time on NDC datasets of different sizes.}\label{Fig:4}
    \end{figure}

      Clearly, utilizing local information significantly improves the model's accuracy on datasets of different scales. At the same time, it also demonstrates the feasibility of LUGSI in handling classification problems with large-scale datasets. Therefore, Fig.\ref{Fig:4} further confirms the previous conclusion that using local information on the sample set not only enables the LUGSI model to effectively handle large-scale datasets but also does not significantly increase the required training time.

    \section{Conclusion}\label{sec4}

    This paper introduces a statistical invariant algorithm based on granularity. This algorithm utilizes the $K$-means clustering method to construct granules and introduces $\pmb v$ vectors to construct statistical invariants for each granule. By maximizing the distance between classes, a large invariant matrix is converted into multiple smaller invariant matrices, reducing the complexity of the invariant matrix and enabling the LUGSI model to effectively address classification problems on large-scale datasets with shorter training times. The experimental results demonstrate that LUGSI outperforms LSSVM and VSVM in terms of generalizability and training times, indicating that incorporating finer structural positional information into the model through granular construction is beneficial for the classification task. 
    
    This paper utilizes the $K$-means clustering method to construct granules. Finding a more suitable clustering method or determining the optimal number of clusters is interesting. Additionally, the risk measure employed in this paradigm adopts the least squares loss. Exploring potentially more suitable alternative loss metrics will be a focus of our future research.

    \bmhead{Acknowledgments}
    
    This work is supported in part by National Natural Science Foundation of China (Nos. 12271131, 62106112, 62066012), and in part by the Key Laboratory of Engineering Modeling and Statistical Computation of Hainan Province.






\section*{Declarations}
\begin{itemize}
    \item Competing interests: The authors declare that they have no known competing financial interests or personal relationships that could have appeared to influence the work reported in the paper \textquotedblleft Learning using granularity statistical invariants for classification\textquotedblright.
    \item Ethical and informed consent for data used: This paper exclusively utilized datasets that have been authorized or made publicly available.
    \item Data availability and access: The datasets analysed during the current study are available in the UCI and NDC repositories, \url{https://archive.ics.uci.edu/datasets}, \url{https://research.cs.wisc.edu/dmi/svm/ndc/}.
\end{itemize}






\bibliography{sn-bibliography}

\newpage

\section*{Authors contribution statement}

Conceptualization: Ting-Ting Zhu; Writing-original draft preparation: Ting-Ting Zhu; Writing-review and editing: Yuan-Hai Shao, Chun-Na Li, Tian Liu; Funding acquisition: Yuan-Hai Shao, Chun-Na Li; Supervision: Yuan-Hai Shao, Chun-Na Li.

\let\cleardoublepage\clearpage
\end{document}